\documentclass[sigconf]{lib/acmart}

\usepackage[utf8]{inputenc} 
\usepackage[T1]{fontenc}    
\usepackage{hyperref}       
\usepackage{url}            
\usepackage{booktabs}       
\usepackage{amsfonts}       
\usepackage{nicefrac}       
\usepackage{microtype}      
\usepackage{xcolor}         

\usepackage{graphicx}
\usepackage{array}
\usepackage{latexsym}
\usepackage{amsmath}
\usepackage{amsthm}
\usepackage{mathtools}
\usepackage{bm}
\usepackage{xspace}
\usepackage{physics}
\usepackage{colortbl}
\usepackage{algorithm}
\usepackage{algpseudocode}
\usepackage{url}
\usepackage{multirow}
\usepackage{multicol}
\usepackage{enumitem}
\usepackage{pifont}
\usepackage[subrefformat=parens]{subcaption}
\usepackage{xifthen}  
\usepackage{xparse}  
\usepackage{ifthen}  
\usepackage{tikz-cd}
\usetikzlibrary{automata, positioning}
\usepackage{leftidx}
\usepackage{tcolorbox}
\usepackage{lib/original}
\usepackage{tabularx}
\usepackage{cleveref}
\usepackage{etoc}

\NewDocumentCommand{\lookback}{m o}{%
  {\textcolor[HTML]{ff0000}{#1}}%
  \IfValueT{#2}{%
    \noindent{\color{blue}~(Note: #2)}%
  }%
}

\graphicspath{{./fig/}}

\allowdisplaybreaks[4]

\crefname{figure}{Figure}{Figures}
\crefname{equation}{Eq.}{Eqs.}
\crefname{model}{Model}{Models}
\crefname{section}{Section}{Sections}
\crefname{algorithm}{Algorithm}{Algorithms}
\crefname{assumption}{Assumption}{Assumptions}
\crefname{table}{Table}{Tables}
\crefname{lemma}{Lemma}{Lemmas}
\crefname{definition}{Definition}{Definitions}
\crefname{appendix}{Appendix}{Appendixes}

\newcommand{\argmax}{\mathop{\rm arg\,max}}
\newcommand{\argmin}{\mathop{\rm arg\,min}}
\NewDocumentCommand{\Exp}{ m o }{%
\mathbb{E}\IfValueT{#2}{_{#2}}\!\left[#1\right]%
}

\newcommand{\idmat}{\mI}
\newcommand{\R}{\mathbb{R}}
\newcommand{\C}{\mathbb{C}}
\newcommand{\N}{\mathbb{N}}

\newcommand{\E}{\mathbb{E}}

\newcommand{\inner}[2]{\left\langle#1, #2\right\rangle}

\renewcommand{\mid}{\,|\,}

\newcommand{\mypara}[1]{\noindent\textbf{#1.}}

\newcommand{\myparaitemize}[1]{\noindent{\textbf{#1.}}}

\newtheorem{problem}{Problem}

\newtheorem{definition}{Definition}

\newtheorem{model}{Model}

\newcommand{\rand}[1]{{#1}}

\def\randx{{\rand{{X}}}}

\newcommand{\mat}[1]{\mathbf{#1}}

\newcommand{\vect}[1]{\mathbf{#1}}

\def\mA{{\mat{A}}}

\def\mC{{\mat{C}}}

\def\mF{{\mat{F}}}
\def\mG{{\mat{G}}}
\def\mH{{\mat{H}}}
\def\mI{{\mat{I}}}
\def\mJ{{\mat{J}}}
\def\mK{{\mat{K}}}

\def\mM{{\mat{M}}}

\def\mP{{\mat{P}}}
\def\mQ{{\mat{Q}}}
\def\mR{{\mat{R}}}
\def\mS{{\mat{S}}}

\def\mU{{\mat{U}}}
\def\mV{{\mat{V}}}
\def\mW{{\mat{W}}}
\def\mX{{\mat{X}}}

\def\mZ{{\mat{Z}}}

\def\va{{\vect{a}}}

\def\vc{{\vect{c}}}

\def\vk{{\vect{k}}}

\def\vx{{\vect{x}}}
\def\vy{{\vect{y}}}
\def\vz{{\vect{z}}}

\newcommand{\cC}{\mathcal{C}}

\newcommand{\cS}{\mathcal{S}}

\DeclareMathAlphabet{\mathsfit}{\encodingdefault}{\sfdefault}{m}{sl}
\SetMathAlphabet{\mathsfit}{bold}{\encodingdefault}{\sfdefault}{bx}{n}

\newcommand{\stream}{\mat{X}}

\renewcommand{\t}{t}
\renewcommand{\tt}{{\t+1}}
\newcommand{\tindex}[1]{{_{#1}}}
\newcommand{\ut}{\tindex{t}}
\newcommand{\utt}{\tindex{t+1}}
\newcommand{\tc}{{t_c}}
\newcommand{\ts}{{t_s}}
\newcommand{\obs}{\vx}

\newcommand{\obsone}{\vect{x}\tindex{1}}
\newcommand{\obst}{\vect{x}\ut}
\newcommand{\obstt}{\vect{x}\utt}
\newcommand{\obstc}{\vect{x}\tindex{\tc}}

\renewcommand{\d}{d}
\newcommand{\n}{T}
\newcommand{\ls}{\ell_s}

\newcommand{\Hhil}{\mathcal{H}}

\newcommand{\Hnorm}[1]{\norm{#1}_{\Hhil}}

\newcommand{\Hinner}[2]{\inner{#1}{#2}_\Hhil}

\newcommand{\fmap}{\phi}

\newcommand{\dynamics}{\bm{f}}
\newcommand{\noise}{\boldsymbol\omega}

\newcommand{\M}{\mathcal{M}}
\newcommand{\F}{\mathcal{F}}

\newcommand{\koop}{\mathcal{K}_\pi}

\newcommand{\obssca}{\eta}

\newcommand{\invm}{\pi}
\newcommand{\Ltwo}{L_\pi^2(\M)}
\newcommand{\eigf}{\psi}

\newcommand{\eigv}{\lambda}

\newcommand{\mode}{\gamma}

\renewcommand{\rank}{r}

\newcommand{\latent}{\vz}
\newcommand{\est}{\hat{\vx}}

\newcommand{\indices}{\mathcal{I}}

\newcommand{\regime}{\boldsymbol{\theta}}
\newcommand{\regimeset}{\boldsymbol{\Theta}}
\newcommand{\nrgm}{R}

\newcommand{\candparam}{\mathcal{C}}

\newcommand{\dict}{\mathcal{D}}

\newcommand{\initcond}{\boldsymbol\mu}

\newcommand{\initcov}{\mat{P}}

\newcommand{\ith}[2][]{%
    \ifthenelse{\isempty{#1}}%
    {
        #2_{(i)}
    }%
    {
        {#2_{(i)}^{#1}}
    }%
}
\newcommand{\first}[2][]{%
    \ifthenelse{\isempty{#1}}%
    {
        #2_{(1)}
    }%
    {
        {#2_{(1)}^{#1}}
    }%
}
\newcommand{\dth}[2][]{%
    \ifthenelse{\isempty{#1}}%
    {
        #2_{(d)}
    }%
    {
        {#2_{(d)}^{#1}}
    }%
}

\newcommand{\methodname}{\textsc{AdaKoop}}
\newcommand{\method}{\textsc{\methodname}\xspace}

\definecolor{ForestGreen}{rgb}{0.13, 0.55, 0.13} 
\definecolor{BrickRed}{rgb}{0.80, 0.25, 0.33}    
\newcommand*\rot{\rotatebox{90}}
\newcommand*\OK{\ding{51}}
\newcommand*\NO{}

\AtBeginDocument{%
  }

\copyrightyear{2026}
\acmYear{2026}
\setcopyright{cc}
\setcctype{by}
\acmDOI{10.1145/3770855.3817851}
\acmConference[KDD '26]{Proceedings of the 32nd ACM SIGKDD Conference on Knowledge Discovery and Data Mining V.2}{August 09--13, 2026}{Jeju Island, Republic of Korea}
\acmBooktitle{Proceedings of the 32nd ACM SIGKDD Conference on Knowledge Discovery and Data Mining V.2 (KDD '26), August 09--13, 2026, Jeju Island, Republic of Korea}
\acmISBN{979-8-4007-2259-2/2026/08}

\newtheorem{lemma}{Lemma}
\begin{document}





\title{
\textsc{AdaKoop}:
Efficient Modeling of Nonlinear Dynamics from Nonstationary Data Streams with Koopman Operator Regression}


\renewcommand{\shorttitle}{
Efficient Modeling of Nonlinear Dynamics from Nonstationary Data Streams}

\author{Naoki Chihara}
\affiliation{%
  \institution{SANKEN, The University of Osaka}
  \city{Osaka}
  \country{Japan}
}
\email{naoki88@sanken.osaka-u.ac.jp}

\author{Ren Fujiwara}
\affiliation{%
  \institution{SANKEN, The University of Osaka}
  \city{Osaka}
  \country{Japan}
}
\email{r-fujiwr@sanken.osaka-u.ac.jp}

\author{Yasuko Matsubara}
\affiliation{%
  \institution{SANKEN, The University of Osaka}
  \city{Osaka}
  \country{Japan}
}
\email{yasuko@sanken.osaka-u.ac.jp}

\author{Yasushi Sakurai}
\affiliation{%
  \institution{SANKEN, The University of Osaka}
  \city{Osaka}
  \country{Japan}
}
\email{yasushi@sanken.osaka-u.ac.jp}









%
%
\begin{CCSXML}
<ccs2012>
   <concept>
       <concept_id>10002950.10003648.10003688.10003693</concept_id>
       <concept_desc>Mathematics of computing~Time series analysis</concept_desc>
       <concept_significance>500</concept_significance>
   </concept>
   <concept>
       <concept_id>10002951.10003227.10003351.10003446</concept_id>
       <concept_desc>Information systems~Data stream mining</concept_desc>
       <concept_significance>500</concept_significance>
   </concept>
   <concept>
       <concept_id>10010147.10010257.10010293.10010075</concept_id>
       <concept_desc>Computing methodologies~Kernel methods</concept_desc>
       <concept_significance>500</concept_significance>
   </concept>
    <concept>
        <concept_id>10002950.10003648.10003662</concept_id>
        <concept_desc>Mathematics of computing~Probabilistic inference problems</concept_desc>
        <concept_significance>300</concept_significance>
    </concept>
 </ccs2012>
\end{CCSXML}

\ccsdesc[500]{Mathematics of computing~Time series analysis}
\ccsdesc[500]{Information systems~Data stream mining}
\ccsdesc[500]{Computing methodologies~Kernel methods}
\ccsdesc[500]{Mathematics of computing~Probabilistic inference problems}

\keywords{Time series,
Stream processing,
Dynamical system,
Koopman operator theory,
Reproducing kernel Hilbert space
}

\begin{abstract}
Real-time data analysis requires the ability to accurately and adaptively address nonlinear dynamics in a nonstationary data stream while preserving computational efficiency.
However, nonlinear dynamics are so complex that capturing dynamically changing nonlinear patterns and utilizing them for downstream tasks under strict time constraints is nontrivial.
To bridge the gap between nonlinear complexity and computational tractability, this study applies Koopman operator theory, which states that nonlinear dynamics can be represented as linear transitions in an infinite-dimensional space.
Building upon finite-dimensional approximations of this operator,
we present \method, an efficient streaming algorithm for modeling nonlinear dynamics over nonstationary data streams.
Our approach utilizes a probabilistic framework grounded in Koopman operator theory, treating both raw observations and reproducing kernel Hilbert space (RKHS) features as emissions from latent vectors.
This dual-view formulation allows nonlinear dynamics to be expressed as a tractable linear system.
Therefore, \method enables the efficient and stable modeling of nonlinear dynamics in a streaming fashion, avoiding the prohibitive computational costs of iterative nonlinear optimization.
Furthermore, to address nonstationarity in data streams,
\method adaptively detects the switching of patterns via statistical hypothesis testing for abrupt pattern shifts and incrementally updates model parameters to handle continuous changes.
Extensive experiments on a total of 71 practical benchmark datasets across various domains demonstrate that \method outperforms state-of-the-art methods in terms of real-time forecasting accuracy and computational efficiency.

\end{abstract}

\maketitle

\section{Introduction}
    \label{section:introduction}
    With the rapid worldwide digitalization, massive volumes of time series data are collected and processed sequentially, calling for efficient and precise stream processing capabilities across various applications, such as precipitation nowcasting~\cite{Foresti2016-gf} and real-time wearable health monitoring~\cite{Kalantarian2017-yq}.
Unlike batch learning settings, where all the observed data is available in advance, these environments involve potentially semi-infinite data streams.
The unbounded nature of these data streams makes it impossible to store the entire history, particularly in resource-constrained edge computing environments~\cite{Matsubara2025-fv}. This requires performing downstream tasks (e.g., real-time forecasting) in a single pass over incoming data while adapting rapidly to current dynamic patterns.
\par
However, achieving such adaptive processing remains a challenge because real-world data streams often exhibit intrinsic complexities that hinder efficient real-time data analysis under strict latency constraints.
Specifically, we mainly encounter two challenges:
\mypara{(a) Nonlinear dynamics}
Real-world data streams inherently exhibit complex nonlinear dynamics~\cite{Dysts}.
For example, atmospheric weather patterns are governed by the Lorenz system~\cite{lorenz}, which is a typical nonlinear dynamical system.
In healthcare, variability patterns observed in physiological signals, such as the electrocardiogram (ECG), exhibit deterministic chaotic behavior~\cite{Gupta2019-qs}.
Nonlinear dynamics are too complex for simple linear models, so a more expressive model is required.
Recently, the data-driven discovery of governing nonlinear dynamics from data has attracted significant interest, and various approaches have been proposed, including polynomial regressions~\cite{brunton2016discovering, bertsimas2023learning, fujiwara2025modeling} and
recurrent neural networks (RNNs)~\cite{cao2018brits, chen2018neural}.
However, most previous methods involve computationally intensive optimization or exhibit high sensitivity to noise, making them suboptimal for data stream processing.
\\
\mypara{(b) Nonstationarity}
The statistical properties of a data stream can change continuously and abruptly over time due to external influences or shifts in underlying mechanisms.
For example, in wearable health monitoring, physiological signal patterns may change gradually with circadian rhythms but can shift abruptly when the wearer changes activities (e.g., from resting to exercising) or when the sensor is re-positioned.
This variability further exacerbates data stream processing because future values may exhibit patterns different from those observed in the past.
Specifically, even if a model can learn patterns accurately in the early stages, it will inevitably degrade as time-changing patterns emerge.
Based on the above discussion, our problem is summarized as follows:
\begin{tcolorbox}[colback=gray!10, colframe=gray!10, sharp corners=all]
\textit{How can we estimate complex nonlinear dynamics while remaining computationally efficient enough to quickly adapt to both continuous and abrupt changes in a nonstationary data stream?}
\end{tcolorbox}

In this paper, we propose an efficient streaming algorithm called \method, designed for modeling nonlinear dynamics in nonstationary data streams.
To capture nonlinear dynamics efficiently,
we first introduce a probabilistic augmented kernel state space model constructed based on Koopman operator theory, defined on a reproducing kernel Hilbert space (RKHS).
The Koopman operator provides a linear representation of nonlinear dynamical systems by lifting the state space to an infinite-dimensional space.
Our model treats both raw observations and RKHS features as emissions from a shared latent linear dynamical system;
therefore, it enables complex nonlinear dynamics to be expressed as a tractable linear system.
Thus, \method enables efficient and stable modeling of nonlinear dynamics via closed-form updates.
To further improve the efficiency of the estimation, we initialize the model parameters based on Koopman operator regression and subsequently refine them using the expectation-maximization (EM) algorithm.
This initialization serves as a warm start, helping to avoid undesired local optima and accelerate convergence.
In addition, our streaming algorithm performs both abrupt change detection based on statistical hypothesis testing and continuous model parameter updates to adaptively handle nonstationary data streams, making it highly scalable for semi-infinite data streams.

\vspace{0.1em}
\mypara{Contributions}
Our main contributions are summarized as follows:
\begin{itemize}[leftmargin=12.5pt,topsep=1pt]
    \item We introduce a probabilistic dual-view state space model that treats both raw observations and RKHS features as emissions from a shared latent linear system, thereby achieving stable and efficient modeling of nonlinear dynamics.
    \item We present \method, an efficient streaming algorithm with constant time complexity per process, independent of the length of a data stream (see \cref{lemma:time}), that handles time-varying nonlinear dynamics in a data stream. It employs statistical hypothesis testing to adaptively switch models and utilizes closed-form updates for stream processing.
    \item Extensive experiments on 71 practical benchmark datasets show that \method outperforms state-of-the-art methods in both forecasting accuracy and computational efficiency.
\end{itemize}

\mypara{Reproducibility}
Our source code is available at this repository: \url{https://github.com/C-Naoki/AdaKoop}.

\section{Related Work}
    \label{section:related_work}
    \begin{table}[t]
\caption{Capabilities of approaches.}
\centering
\vspace{-0.5em}
\resizebox{1.0\linewidth}{!}{
\begin{tabular}{l|cccc|ccc|cc|c}
\toprule
& \multicolumn{4}{c|}{KOT} & \multicolumn{3}{c|}{TSF} & \multicolumn{2}{c|}{DSP} &  \\
& \rot{FSDMD~\cite{takeishi2018factorially}} & \rot{EDMD~\cite{williams2015data}} & \rot{WDMD~\cite{zhang2019online}} & \rot{Kostic et al.~\cite{kostic2022learning}} & \rot{PAttn~\cite{tan2024language}} & \rot{Koopa~\cite{liu2023koopa}} & \rot{OneNet~\cite{wen2023onenet}} & \rot{sKAF~\cite{giannakis2023learning}} & \rot{ModePlait~\cite{modeplait}} & \rot{\textbf{\method}} \rule[0mm]{0mm}{23mm} \\ 
\midrule
Online algorithm & \NO & \NO & \OK & \NO & \NO & \NO & \OK & \OK & \OK & \OK \rule[0mm]{0mm}{3.1mm} \\ 
Forecasting & \NO & \NO & \NO & \NO & \OK & \OK & \OK & \OK & \OK & \OK \rule[0mm]{0mm}{3.1mm} \\ 
Nonlinearity & \NO & \OK & \NO & \OK & \NO & \OK & \OK & \OK & \OK & \OK \rule[0mm]{0mm}{3.1mm} \\ 
Nonstationarity & \OK & \NO & \OK & \NO & \NO & \OK & \OK & \NO & \OK & \OK \rule[0mm]{0mm}{3.1mm} \\ 
Infinite feature space & \NO & \OK & \NO & \OK & \NO & \NO & \NO & \OK & \NO & \OK \rule[0mm]{0mm}{3.1mm} \\ 
Linear representation & \OK & \OK & \OK & \OK & \NO & \OK & \NO & \NO & \NO & \OK \rule[0mm]{0mm}{3.1mm} \\ 
\bottomrule
    \end{tabular}
}
\normalsize
\label{table:capability}
\vspace{-0.5em}
\end{table}
Here, we provide an overview of investigations related to our work.
Table \ref{table:capability} summarizes six relative advantages of \method.

\myparaitemize{Koopman operator theory}
Koopman operator theory (KOT)~\cite{
koopman1931hamiltonian,
brunton2022koopman} has been developed over decades to analyze and understand nonlinear dynamical systems without any explicit prior knowledge of their governing temporal dynamics~\cite{
kawahara2016dynamic,
kostic2022learning,
berman2023multifactor}.
According to KOT, every deterministic nonlinear dynamical system is represented by a linear yet infinite-dimensional operator without losing information.
One advantage of using linear operators is that they enable spectral decomposition, which provides insights into the behavior of nonlinear dynamical systems.
Dynamic mode decomposition (DMD) is a representative method for approximating the Koopman operator.
DMD was originally proposed as a tool for diagnosing fluid flows~\cite{schmid2010dynamic}, but its reliance on linear monomials as basis functions in the feature space restricts its ability to model nonlinear dynamics.
Subsequent research has explored various extensions to broaden its range of applications~\cite{
proctor2016dynamic,
takeishi2017bayesian}.
For example, FSDMD~\cite{takeishi2018factorially} and Windowed DMD (WDMD)~\cite{zhang2019online} address time-varying linear systems through distinct methodologies.
EDMD~\cite{williams2015data} addresses the limitations of linear monomials by mapping data into a high-dimensional space defined by user-specified nonlinear basis functions.
Recently, several methods aim to learn the Koopman operators on a reproducing kernel Hilbert space (RKHS)~\cite{
klus2020eigendecompositions,
meanti2024estimating,
kostic2024sharp}.
A framework to learn the Koopman operator in an RKHS using finite temporally sampled data was presented in~\cite{kostic2022learning}.
However, they assume that the input data is generated by an autonomous dynamical system, meaning that they cannot handle ever-changing nonlinear dynamics in a nonstationary data stream.

\myparaitemize{Data stream processing}
Considering the real-world scenario where a large number of time series data is generated sequentially,
data stream processing (DSP) has been increasingly recognized for its practicality and efficiency~\cite{gama2019learning}.
In particular, as underlying patterns are likely to shift over time, methods incapable of adapting to these changes often suffer from degraded accuracy.
Thus, DSP has proved greatly significant to the machine learning and database communities~\cite{
engel2004kernel,
aggarwal2007data,
devulapalli2024learning,
Higashiguchi2025-cf,
Nakamura2026-lp,
Kakio2026-kz,
fujiwara2026when}.
ModePlait~\cite{modeplait} is the latest streaming algorithm for discovering time-changing causal relationships and forecasting future values while monitoring transitions of dynamic patterns.
However, it remains challenging to estimate highly nonlinear dynamics.
Although Streaming KAF (sKAF)~\cite{giannakis2023learning} utilizes a streaming algorithm to reduce computational complexity,
its update rule implicitly assumes that each data point is obtained from a static environment.
Moreover, it directly generates future values from current data without focusing on the sequential nature of time series, and its forecasting performance can be unstable.

\myparaitemize{Time series forecasting}
Time series forecasting (TSF) is one of the most important tasks in time series analysis.
Autoregressive integrated moving average (ARIMA)~\cite{box1976arima} and linear dynamical system (LDS)~\cite{Kalman1963-sz} are classical statistical methods for TSF.
In recent years, numerous TSF methods have benefited from deep learning approaches~\cite{
zhou2021informer,
piao2024fredformer,
murad2025wpmixer}.
For example, PAttn~\cite{tan2024language} is a state-of-the-art, simple linear-based method.
Koopa~\cite{liu2023koopa} utilizes Koopman operator theory to learn nonstationary time series.
However, most of these approaches focus only on offline optimization and cannot adaptively handle time-changing patterns in a nonstationary data stream.
In contrast, OneNet~\cite{wen2023onenet} is an online learning approach that addresses transient environmental changes by dynamically updating two complementary models.
While the online ensembling approach decreases the forecasting error, it also increases the number of parameters and computational time, limiting its practical applicability in a streaming setting.
\section{Model Formulation}
    \label{section:model}
    Here, we formulate the problem addressed in this work and then present the \method model designed to tackle this challenge.
\subsection{Problem Definition}
\mypara{Notation}
The main symbols utilized in this paper are in \cref{table:symbols}.
We define $[m:n]=\{m,\ldots,n\}$ and $[n]=\{1,\ldots,n\}$ for $n,m\in\N$.
Let $\M$ be an observation space, and $\Hhil$ be a reproducing kernel Hilbert space (RKHS) with kernel function $k:\M\times\M\to\R$ and feature map $\fmap:\M\to\Hhil$~\cite{aronszajn1950theory}, such that $k(\vx,\vy)=\Hinner{\fmap(\vx)}{\fmap(\vy)}$ for $\vx,\vy\in\M$.
The Moore-Aronszajn theorem guarantees that a symmetric, positive kernel function $k$ provides a unique RKHS.
\par
In this paper, we focus on efficiently modeling nonlinear dynamics from nonstationary data streams and using the learned models for real-time forecasting.
Let $\stream = \{\obsone, ..., \obstc, ...\}$ be a semi-infinite multivariate data stream where $\tc$ is the current time and $\obst\in\R^{\d}$ is a $\d$-dimensional observation at time $\t$.
In our model, we assume that observation data $\obst$ behaves as the following time-varying discrete dynamical systems,
\begin{align}
  \label{eq:nlds}
  \obstt = \dynamics_\t(\obst)+\noise\ut,\quad\t\in\N,
\end{align}
where $\dynamics_\t$ is a nonlinear state-transition function describing the dynamics at time point $\t$,
and $\noise\ut$ denotes an i.i.d. zero-mean noise term.  
Nonstationarity mainly arises from the time-dependency of $\dynamics_\t$.
Thus, we tackle adaptively estimating nonlinear time-varying dynamical systems $\dynamics_\t$ from a data stream and forecast an $\ls$-steps-ahead future value, i.e., $\obs\tindex{\tc+\ls}$ sequentially.
Consequently, the problem addressed in our work is summarized as follows:
\begin{problem}
\textbf{Given}
a semi-infinite multivariate data stream $\stream$, which consists of $d$-dimensional vector $\obst$ generated sequentially, i.e., $\stream = \{ \obsone, ..., \obstc, ... \}$, where $\tc$ is the current time,
\begin{itemize}
    \item \textbf{Capture} complex nonlinear dynamic patterns over time,
    \item \textbf{Forecast} $\ls$-steps-ahead future values, i.e., 
    $\obs_{\tc+\ls}$,
\end{itemize}
continuously and quickly, in a streaming fashion.
\end{problem}

\subsection{Proposed Model -- \method}
We now explain our proposed model in detail.
To achieve our goal, we require \method model to be capable of the following.
\begin{itemize}[leftmargin=12.5pt]
    \item \textit{Dual-view kernelized system}:
    We formulate a probabilistic augmented state space model that treats both raw observations and RKHS features as emissions from a shared latent linear dynamical system.
    \item \textit{Switching structures}:
    Our model represents the underlying time-varying system $\dynamics_t$ as a switching dynamical system composed of multiple distinct local patterns.
\end{itemize}
\subsubsection{Dual-view kernelized system}
We begin with a simple case in which we have only a single dynamic pattern in a data stream.
To capture nonlinear dynamics within a linear state-space framework, we leverage the concept of the Koopman operator, which lifts the state into a higher-dimensional feature space where the dynamics become linear. Let $\psi: \mathbb{R}^d \to \mathbb{R}^m$ be a feature map associated with an RKHS, approximating the infinite-dimensional lifting.
We define an augmented observation vector $\mathbf{y}_t \in \mathbb{R}^{d+m}$ combining the dual views of the raw data and its feature representation
$$
\mathbf{y}_t=
\begin{bmatrix}
    \mathbf{x}_t\\
    \psi(\obs_t)
\end{bmatrix}.
$$
Ideally, $\psi(\mathbf{x}_t)$ is functionally dependent on $\mathbf{x}_t$, but modeling this deterministic constraint strictly within a generative probabilistic framework leads to a degenerate likelihood, making inference numerically unstable.
To address this, we adopt a relaxation strategy: we treat the augmented vector $\mathbf{y}_t$ as a noisy observation emitted from a low-dimensional latent vector $\latent_t \in \mathbb{R}^r$.
This formulation posits that the dominant dynamics evolve on a low-dimensional manifold~\cite{McQuarrie2021-ej}.
Furthermore, this relaxation allows the probabilistic model to absorb approximation errors arising from the finite-dimensional truncation of the Koopman operator into the observation noise.
Consequently, we describe the single dynamic pattern using the following equation:
\begin{model}
    \label{model:single}
    Let
    $\latent\ut\in\R^\rank$ be the latent vector at time $\t$,
    and $\vy\ut\in\R^{d+m}$ the augmented observation vectors at time $\t$.
    The following equation governs the single nonlinear dynamic pattern.
    \begin{align}
        \label{eq:single}
        \begin{split}
            \latent\utt&=\mA\latent\ut+\boldsymbol\xi\ut,\quad
            \boldsymbol\xi\ut\sim\mathcal{N}(\vect{0},\mat{Q}) \\
            \mathbf{y}_t&=
            \mH
            \latent\ut+\boldsymbol\eta\ut,\quad
            \boldsymbol\eta\ut\sim\mathcal{N}(\vect{0},\mat{R}) \\
        \end{split}
    \end{align}
    with initial conditions $\latent_1\sim\mathcal{N}(\initcond_0,\initcov_0)$
    In this equation,
    $\mA\in\R^{\rank\times\rank}$ is a transition matrix and
    $\,\mH=[\mC^\top\ \mW^\top]^\top\in\R^{(d+m)\times\rank}$ is an augmented observation matrix, where $\mC\in\R^{d\times\rank}$ maps the latent state to the raw observation and $\mW\in\R^{m\times\rank}$ maps it to the feature space.
    The process noise covariance is $\mQ\in\R^{\rank\times\rank}$ and
    the observation noise covariance is given by $\mathbf{R} = \operatorname{BlockDiag}(\mathbf{R}_{\mathbf{x}}, \mathbf{R}_{\psi}) \in \mathbb{R}^{(d+m)\times (d+m)}$, where $\mathbf{R}_{\mathbf{x}}\in\R^{d\times d}$ represents measurement noise and $\mathbf{R}_{\psi}\in\R^{m\times m}$ accommodates the kernel approximation residuals.
\end{model}
Here, we enforce this block-diagonal structure by assuming statistical independence between the measurement noise in $\obs_t$ and the approximation residuals in $\psi(\obs_t)$. This structural constraint decouples the two error sources, thereby stabilizing the inference and preventing the model from overfitting to the deterministic dependency between the views.
In addition, while \cref{model:single} is formulated as a linear system,
it can accurately capture a wide range of complex nonlinear dynamics, which is supported by Koopman operator theory.
Consequently, we have the following:
\begin{definition}[Dual-view kernelized system: DKS]
    Let $\regime$ be parameters of a dual-view kernelized system for a single nonlinear dynamic pattern,
    i.e., $\regime=\{\mA,\mQ,\mC,\mW,\mR_x,\mR_\psi,\initcond_0,\initcov_0\}$.
\end{definition}
\subsubsection{Switching structures}
\cref{model:single} is suitable for summarizing time series data generated by a time-invariant system.
However, considering the real-world scenario where dynamic patterns are ever-changing based on $\dynamics_t$, its ability is still insufficient for data stream processing.
To overcome this limitation, we introduce a more comprehensive model.
We assume that time-varying systems are approximated by discrete switching dynamical systems.
This assumption is generally valid because the dynamic patterns do not frequently change over time.
Then, the data stream $\stream$ is summarized by the model set $\{\regime_1, \ldots, \regime_R\}$,
where $R$ is the number of models.
Consequently, we have the following:
\begin{definition}[Dual-view kernelized system set]
    Let $\regimeset$ be a parameter set of multiple dual-view kernelized systems,
    i.e., $\regimeset=\{\regime_1, ..., \regime_\nrgm\}$,
    which describes distinct nonlinear dynamic patterns in a data stream.
\end{definition}

\section{Optimization Algorithm}
    \label{section:algorithm}
    \noindent In this section, we present an efficient algorithm with which we estimate the model set $\regimeset$ describing a semi-infinite data stream $\stream$ continuously.
We first introduce a static optimization to estimate model parameters from time series data, and then present a streaming algorithm for monitoring current patterns and forecasting in a streaming fashion based on the static optimization.

\begin{figure*}[t]
    \begin{center}
        \includegraphics[width=1.0\linewidth]{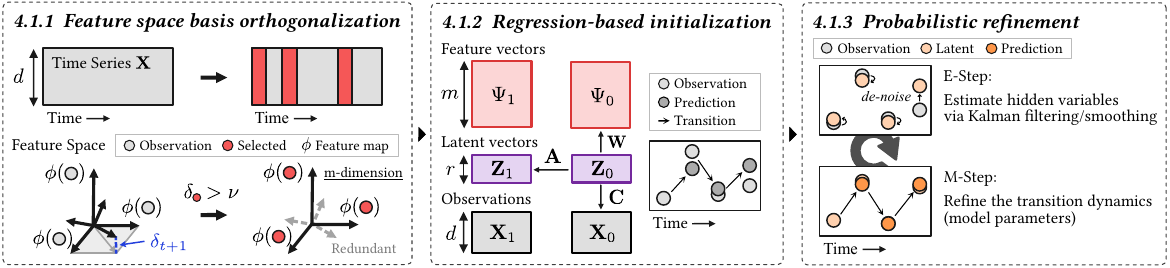}
        \caption{Overview of the static optimization framework.
        \method estimates the model parameters $\regime$ from a time series $\stream$ through three stages:
        (1) \textit{Feature space basis orthogonalization} constructs a sparse dictionary by selecting observations that exceed a linear dependence threshold $\nu$;
        (2) \textit{Regression-based initialization} performs spectral decomposition on the feature matrices $\Psi$ to initialize the latent linear dynamics $(\tilde{\mA},\tilde{\mC},\tilde{\mW})$; and
        (3) \textit{Probabilistic refinement} iteratively optimizes the latent states via the expectation-maximization (EM) algorithm, leading to dynamical consistency and robustness to noise.}
        \label{fig:static_optimization}
    \end{center}
    \vspace{-1em}
\end{figure*}

\subsection{Static Optimization}
\label{section:static_optimization}
Here, we describe the static optimization to estimate the best model parameter $\regime$ for a time series ${\stream}\in\R^{\d\times\n}$.
\cref{fig:static_optimization} illustrates the overall framework and \cref{alg:learning} (see \cref{appendix:alg}) shows its estimation procedure.
The static optimization consists of the following three components:
\begin{itemize}[leftmargin=12.5pt]
\item \textit{Feature space basis orthogonalization}: We construct a sparse finite-dimensional feature space from the RKHS to ensure computational stability and efficiency.
\item \textit{Regression-based initialization}: Spectral initialization of the model parameters via reduced rank regression (RRR), which interprets the system dynamics as a finite-rank approximation of the Koopman operator.
\item \textit{Probabilistic refinement}: We refine the initialized parameters by maximizing the marginal likelihood using the EM algorithm, treating the latent states as hidden variables.
\end{itemize}
\subsubsection{Feature space basis orthogonalization}
In this section, we describe how to efficiently optimize our proposed model.
Generally, kernel-based approaches tend to be computationally expensive as the dataset size increases.
This property is the major drawback for data stream processing~\cite{engel2004kernel}.
Additionally, linearly dependent feature vectors in the RKHS may lead to redundant representations and numerical instability.
Hence, we use the criterion of whether the model should incorporate an observed data point.
According to the representer theorem~\cite{kimeldorf1971some},
the optimal solution is given by a linear combination of $\{k(\cdot, \obs\ut)\}^{\n}_{\t=1}$, i.e.,  
$\dynamics(\obs)=\sum_{\t=1}^{\n}\beta_\t k(\obs, \obs\ut)$.
This implies that if $\fmap(\obs\ut)$ lies approximately in the span of $\{\fmap(\obs_\tau)\}_{\tau<\t}$,
then adding $\obs\ut$ does not enlarge the hypothesis subspace and the corresponding coefficient $\beta_t$ can be set to zero.
Motivated by this observation,
we maintain a dictionary of selected observations $\dict\ut=\{\obs_\tau\mid\tau\in\indices\!\ut\}$,
where $\indices\!\ut\subseteq[\t]$ represents the indices of selected data points, i.e., $|\indices_\n|=m_\n$.
We define the subspace spanned by the dictionary $\hat{\Hhil}_t=\mathtt{Span}\{\fmap(\obs_i):i\in\indices_t\}\subseteq\Hhil$ and $\Pi_{\dict\ut}$ denotes the orthogonal projection onto $\hat{\Hhil}_t$.
Specifically,
for every $\t\in[\n-1]$,
the above allocation for an observation $\obs\utt$ is realized using the following relation:
\begin{align}
    \label{eq:ald}
    \delta\utt&=\Hnorm{\fmap(\vx\utt)-\Pi_{\dict\ut}\fmap(\vx\utt)}^2,
\end{align}
which admits the kernel form
\begin{align}
    \label{eq:kernel_ald}
    \delta\utt&=k(\vx\utt,\vx\utt)-\vk_t^\top(\vx\utt)\mK_t^{-1}\vk\ut(\vx\utt),
\end{align}
where $\mK_t$ is the Gram matrix of the dictionary $\dict\ut$ and
$\vk\ut(\vx\utt)=[k(\vx_i,\vx\utt)]^\top_{i\in\indices\!\ut}$ is the vector between $\vx\utt$ and $\dict\ut$.
If $\delta\utt>\nu$ holds,
a new data point $\obs\utt$ is added to the dictionary $\dict\ut$.
In this case, the inverse Gram matrix $\mK^{-1}_{t}$ can be updated recursively via the block-matrix inverse:
\begin{align}
    \label{eq:ald_update}
    \mK_{t+1}^{-1}=
    \begin{bmatrix}
        \mK_{t}^{-1}+\delta_{t+1}^{-1}\vc\ut\vc_t^\top & -\delta_{t+1}^{-1}\vc_t\\
        -\delta_{t+1}^{-1}\vc_t^\top & \delta_{t+1}^{-1}
    \end{bmatrix},
\end{align}
where $\vc\ut=\mK^{-1}_t\vk_t(\vx\utt)$.
This formula is derived from the Woodbury identity and allows us to avoid recomputing a matrix inverse from scratch at every step.
The dictionary $\dict_\n$ provides a finite-dimensional feature vector $\psi(\obs)=\left[k(\obs_i,\obs)\right]_{i\in\indices_\n}^\top\in\R^{m}$,
where we denote $m\coloneqq m_\n$ as the dimension in the feature space for brevity.
Importantly,
$\psi(\obs)$ is the vector of inner products between $\phi(\obs)$ and the dictionary $\dict_\n$, i.e., $\psi(\obs)=[\Hinner{\phi(\obs_i)}{\phi(\obs)}]_{i\in\indices_\n}$.
Hence, $\psi$ provides a finite-dimensional representation of $\fmap$ restricted to the subspace $\hat{\Hhil}_\n$.
Note that this feature vector $\psi(\obs)$ corresponds to one in \cref{model:single}.
We conclude this section by introducing the feature matrices as follows:
\begin{align}
\label{eq:feature_mat}
\Psi=[\psi(\obs\ut)]_{\t=1}^\n,~~
\Psi_0=[\psi(\obs\ut)]_{\t=1}^{\n-1},~~
\Psi_1=[\psi(\obs\ut)]_{\t=2}^\n.
\end{align}

\subsubsection{Regression-based initialization}
Here, we explain the coarse parameter estimation based on the Koopman operator regression from the feature matrices.
First, we focus on a transition $\mF\in\R^{m\times m}$ in the feature space as follows:
\begin{align}
\label{eq:problem_F}
\min_{\mF}\frac{1}{\n-1}\|\Psi_1-\mF\Psi_0\|_\mathrm{F}^2+\lambda_A\|\mF\|_\mathrm{F}^2,
\end{align}
where $\lambda_A$ is a regularization parameter.
Reduced rank regression (RRR)~\cite{izenman1975reduced} interprets the system dynamics as a low-rank linear mapping in the feature space,
yielding a spectral initialization for the latent linear system.
To solve the problem, we first compute the empirical moment matrices
\begin{align}
    \label{eq:moment_mat}
    \mS_{00}=\frac{1}{\n-1}\Psi_0\Psi_0^\top+\lambda_A\idmat,\quad
    \mS_{10}=\frac{1}{\n-1}\Psi_1\Psi_0^\top.
\end{align}
We then define the whitened cross-covariance $\mM=\mS_{10}\mS_{00}^{-1/2}$, and compute its truncated singular value decomposition $\mM\approx\mU_\rank\boldsymbol\Sigma_\rank\mV_\rank^\top$,
where $\mU_\rank/\mV_\rank$ denotes the leading $\rank$ left/right singular vectors and $\boldsymbol\Sigma_\rank=\mathrm{diag}(\sigma_1,\ldots,\sigma_\rank)$ is the corresponding singular values.
The dimension of the subspace $\rank$ is automatically determined by~\cite{Gavish2014-dg}.
This procedure identifies the $\rank$-dimensional subspace in feature space that best explains the one-step temporal cross-covariance.
Specifically, we set the projection matrix $\tilde{\mW}=\mU_\rank$ and the latent transition matrix $\tilde{\mA}=\boldsymbol\Sigma_\rank\mV_\rank^\top\mS_{00}^{-1/2}\mU_\rank$. 
Here, $\tilde{\mW}$ spans the dominant predictive feature subspace.
Equivalently, when $\tilde{\mW}$ has orthonormal columns,
$\tilde{\mA}$ can be read as the projected dynamics $\tilde{\mA}=\tilde{\mW}^\dagger\mF\tilde{\mW}$, where $\dagger$ denotes the Moore--Penrose pseudoinverse.

Next, we estimate the observation matrix $\tilde{\mC}$ that maps the latent states back to the observation space.
Specifically, we first compute the projected latent states as $\mZ=\tilde{\mW}^\dagger\Psi$. Then, $\tilde{\mC}$ is obtained by a simple linear regression problem, i.e., $\min_{\mC}||\mX - \mC\mZ||_{\mathrm{F}}^2$, yielding the closed-form estimator $\tilde{\mC}=\mX\mZ^\top(\mZ\mZ^\top)^{-1}$.
We omit regularization, such as kernel ridge regression, because the low-dimensional latent space already serves as a regularizer.
This concludes the spectral initialization of the latent linear system $\{\tilde{\mA},\tilde{\mC},\tilde{\mW}\}$, helping to avoid poor local optima and accelerate convergence.

\subsubsection{Probabilistic refinement}
Since the regression-based initialization is fitted to one-step feature transitions, the resulting coarse estimators $\{\tilde{\mA},\tilde{\mC},\tilde{\mW}\}$ can be sensitive to noise and may be suboptimal in terms of the dynamical consistency of the latent trajectory.
To mitigate these issues, we employ the expectation-maximization (EM) algorithm~\cite{em} for the refinement of the coarse model parameters and latent vectors.
Specifically, we initialize the EM algorithm with the coarse estimators $\{\tilde{\mA},\tilde{\mC},\tilde{\mW}\}$.
Thanks to the linear representation based on Koopman operator theory, we can avoid computationally expensive sampling-based inference (e.g., particle filtering), commonly required by conventional nonlinear state space models.
First, we aim to infer the posterior distributions of the
latent vectors $\latent\ut$.
We can efficiently solve these inference
problems using the classical sum-product message passing algorithms~\cite{pearl1982reverend} in probabilistic graphical models.
In the context of the linear dynamical system, this inference procedure corresponds to applying the Kalman filter in the forward direction and the Rauch–Tung–Striebel (RTS) smoother in the backward pass.
Specifically, the forward passing equations are described by
\begin{align}
    \label{eq:pred_mu}
    \boldsymbol{\mu}_{t|t-1}&=\mA\boldsymbol{\mu}_{t-1|t-1}\\
    \label{eq:pred_P}
    \initcov_{t|t-1}&=\mA\initcov_{t-1|t-1}\mA^\top+\mQ\\
    \label{eq:kalman_gain}
    \mG_t&=\initcov_{t|t-1}\mH^\top(\mH\initcov_{t|t-1}\mH^\top+\mR)^{-1}\\
    \label{eq:filter_mu}
    \boldsymbol{\mu}_{t|t}&=\boldsymbol{\mu}_{t|t-1}+\mG_t(\vy\ut-\mH\boldsymbol{\mu}_{t|t-1})\\
    \label{eq:filter_P}
    \initcov_{t|t}&=(\idmat-\mG_t\mH)\initcov_{t|t-1},
\end{align}
where $\mG_t$ is the Kalman gain, $p(\latent\ut\mid\vy_{1:\t})=\mathcal{N}(\boldsymbol{\mu}_{t|t},\initcov_{t|t})$ with $\initcond_{1|0}=\initcond_{0}$ and $\initcov_{1|0}=\initcov_0$.
Also, the backward passing equations are
\begin{align}
    \label{eq:smoother_gain}\mJ_t&=\initcov_{t|t}\mA^\top\initcov_{t+1|t}^{-1}\\
    \label{eq:smoother_mu}\boldsymbol{\mu}_{t|\n}&=\boldsymbol{\mu}_{t|t}+\mJ_t(\boldsymbol{\mu}_{t+1|\n}-\boldsymbol{\mu}_{t+1|t})\\
    \label{eq:smoother_P}\initcov_{t|\n}&=\initcov_{t|t}+\mJ_t(\initcov_{t+1|\n}-\initcov_{t+1|t})\mJ_t^\top,
\end{align}
where $\mJ_t$ is the smoother gain and $p(\latent\ut\mid\vy_{1:\n})=\mathcal{N}(\boldsymbol{\mu}_{t|\n},\initcov_{t|\n})$.
Based on the smoothed latent vectors $\{\latent_t\}$,
we update the parameters in closed form:
\begin{align}
    \label{eq:update_initial}
    \initcond_0^{new}&=\initcond_{1|\n},\quad
    \boldsymbol\initcov_0^{new}=\initcov_{1|\n}, \\
    \label{eq:update_A}\mA^{new}&=\left(\sum_{\t=1}^{\n-1}\Exp{\latent\utt\latent_t^\top}\right)
    \left(\sum_{\t=1}^{\n-1}\Exp{\latent\ut\latent_t^\top}+(\n-1)\lambda_A\idmat\right)^{-1},\\
    \label{eq:update_H}\mH^{new}&=\left(\sum_{\t=1}^{\n}\vy_t\Exp{\latent_t^\top}\right)\left(\sum_{\t=1}^{\n}\Exp{\latent\ut\latent_t^\top}\right)^{-1},
\end{align}
and we split $\mH$ into $\mC$ and $\mW$ accordingly.
The noise covariances are updated by the residual second moments:
\begin{align}
    \label{eq:update_Q}\mQ^{new}&=\frac{1}{\n-1}\sum_{\t=1}^{\n-1}
    \Exp{\left(\latent\utt-\mA^{new}\latent\ut\right)
    \left(\latent\utt-\mA^{new}\latent\ut\right)^\top},\\
    \label{eq:update_Rx}\mR_x^{new}&=\frac{1}{\n}\sum_{\t=1}^{\n}
    \Exp{\left(\vx\ut-\mC^{new}\latent\ut\right)
    \left(\vx\ut-\mC^{new}\latent\ut\right)^\top},\\
    \label{eq:update_Rpsi}\mR_\psi^{new}&=\frac{1}{\n}\sum_{\t=1}^{\n}
    \Exp{\left(\psi(\obs\ut)-\mW^{new}\latent\ut\right)\left(\psi(\obs\ut)-\mW^{new}\latent\ut\right)^\top},
\end{align}
where $\Exp{\latent_t}=\initcond_{t|\n}$, $\Exp{\latent_{t+1}\latent_t^\top}=\initcov_{t+1|\n}\mJ_t^\top+\boldsymbol{\mu}_{t+1|\n}\boldsymbol{\mu}_{t|\n}^\top$,
and $\Exp{\latent_{t}\latent_t^\top}=\initcov_{t|\n}+\boldsymbol{\mu}_{t|\n}\boldsymbol{\mu}_{t|\n}^\top$.
Lastly, we introduce the key concept required for the subsequent section and present the theoretical evaluation of time complexity.
\begin{definition}[Sufficient statistics set]
\label{def:suff}
Let $\cS=\{\mS_1,\mS_2,\mS_3\}$ be the expected sufficient statistics computed with respect to the smoothed posterior distributions,
defined as
{\small
\begin{align*}
    \mS_1=\frac{1}{\n}\sum_{t=1}^\n\Exp{\latent_{t}\latent_t^\top},\,\,\,
    \mS_2=\frac{1}{\n-1}\sum_{t=1}^{\n-1}\Exp{\latent_{t+1}\latent_t^\top},\,\,\,
    \mS_3=\frac{1}{\n}\sum_{t=1}^\n\vy_t\Exp{\latent_t^\top},
\end{align*}
}
which are required for the adaptive parameter update in \cref{section:adaptive_param_update}.
\end{definition}

\begin{lemma}[Time complexity of static optimization]
\label{theorem:time_complexity_of_static_optimization}
Let $m$ be the final dictionary size produced by the basis orthogonalization, and let $\rank$ be the latent dimension.
Then, the time complexity of the static optimization is $O(\n m^2+m^3+\textnormal{\#}\:\!iter\cdot\n\rank^2(d+m))$.
See \cref{proof:time_complexity_of_static_optimization} for details.
\end{lemma}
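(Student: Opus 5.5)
The proof is an accounting argument: I will bound the cost of each of the three stages of the static optimization separately and then add them. Throughout I treat $d$, $\rank$ and $m$ as the final sizes, and use $m_t\le m$ and $\rank\le m$ to absorb lower-order terms.

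\emph{Basis orthogonalization.} For each $t\in[\n-1]$, evaluating \cref{eq:kernel_ald} requires three things. Forming $\vk_t(\vx_{t+1})$ costs $O(m_t)$ kernel evaluations. Computing $\vc_t=\mK_t^{-1}\vk_t(\vx_{t+1})$ costs $O(m_t^2)$. The inner product costs $O(m_t)$. When the point is admitted, the rank-one block update \cref{eq:ald_update} costs another $O(m_t^2)$. Summing over $t$ and using $m_t\le m$ gives $O(\n m^2)$. Forming the feature matrices \cref{eq:feature_mat} then costs $O(\n m)$ kernel evaluations, which is absorbed.

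\emph{Regression-based initialization.} Computing the moment matrices \cref{eq:moment_mat} costs $O(\n m^2)$. Forming $\mS_{00}^{-1/2}$ by symmetric eigendecomposition, multiplying to obtain $\mM$, and taking the SVD of the $m\times m$ matrix $\mM$ each cost $O(m^3)$. Computing $\tilde{\mA}$ costs at most $O(m^2\rank)\subseteq O(m^3)$. Projecting $\mZ=\tilde{\mW}^\dagger\Psi$ costs $O(\n m\rank)$, since $\tilde{\mW}$ has orthonormal columns and so $\tilde{\mW}^\dagger=\tilde{\mW}^\top$. Solving for $\tilde{\mC}$ costs $O(\n\rank(d+\rank)+\rank^3)$. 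All of these fall inside $O(\n m^2+m^3+\n\rank^2(d+m))$.

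\emph{Probabilistic refinement.} This is the step I expect to need the most care. Written naively, the Kalman gain \cref{eq:kalman_gain} inverts a $(d+m)\times(d+m)$ matrix, which would cost $O((d+m)^3)$ per step and break the claimed bound. I will avoid this by using the information (Woodbury) form
\[
\initcov_{t|t}=\left(\initcov_{t|t-1}^{-1}+\mH^\top\mR^{-1}\mH\right)^{-1},\qquad \mG_t=\initcov_{t|t}\mH^\top\mR^{-1}.
\]
This form needs three ingredients.
\begin{itemize}
\item $\mR^{-1}$ is inverted once per EM iteration. If $\mR_x$ and $\mR_\psi$ are taken diagonal, as is standard for high-dimensional emissions and consistent with the block-diagonal assumption, this costs $O(d+m)$. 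Otherwise it costs $O(d^3+m^3)$, which is $\rank$- and $\n$-independent and is treated as lower order.
\item The products $\mH^\top\mR^{-1}\mH$ and $\mR^{-1}\mH$ are precomputed once per iteration in $O((d+m)\rank^2)$.
\item Each time step then needs only $\rank\times\rank$ inversions, costing $O(\rank^3)$, plus the innovation product $\mH^\top\mR^{-1}\vy_t$, costing $O((d+m)\rank)$.
\end{itemize}

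The RTS smoother \cref{eq:smoother_gain}--\cref{eq:smoother_P} works only with $\rank\times\rank$ matrices, so it costs $O(\rank^3)$ per step. For the M-step \cref{eq:update_A}--\cref{eq:update_Rpsi}, the statistic $\sum_t\vy_t\Exp{\latent_t^\top}$ costs $O(\n(d+m)\rank)$. The covariance updates should be expanded through the sufficient statistics of \cref{def:suff}, for example $\mR^{new}=\frac1\n\sum_t\vy_t\vy_t^\top-\mH^{new}\mS_3^\top$ restricted to its diagonal blocks. Computing only the needed (diagonal) entries of these blocks is within $O(\n(d+m)\rank)$.

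Each iteration therefore costs $O(\n\rank^2(d+m))$, using $\rank\le d+m$ to absorb $\rank^3$. Summing the three stages yields $O(\n m^2+m^3+\#iter\cdot\n\rank^2(d+m))$, as claimed.
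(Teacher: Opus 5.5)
Your proposal is correct and follows the paper's proof: the same three-stage decomposition, the same $O(m_t^2)$-per-step dictionary bound, the same $O(\n m^2+m^3)$ accounting for the moment matrices, $\mS_{00}^{-1/2}$ and the SVD, and the same Woodbury (information-form) trick that brings each Kalman step down to $O(\rank^2(d+m))$. Your extra bookkeeping for $\tilde{\mC}$, the precomputed $\mH^\top\mR^{-1}\mH$, and the M-step fills in details the paper leaves implicit, with one small caveat that the paper's proof shares silently: with full (non-diagonal) blocks $\mR_x,\mR_\psi$, the $O(d^3+m^3)$ cost of forming $\mR^{-1}$ is paid once per EM iteration, adding $\textnormal{\#}\:\!iter\cdot(d^3+m^3)$, which is not literally absorbed into the stated bound, so calling it lower order tacitly assumes diagonal noise, a constant number of iterations, or small $d$.
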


\begin{figure}[t]
    \begin{center}
        \includegraphics[width=0.95\linewidth]{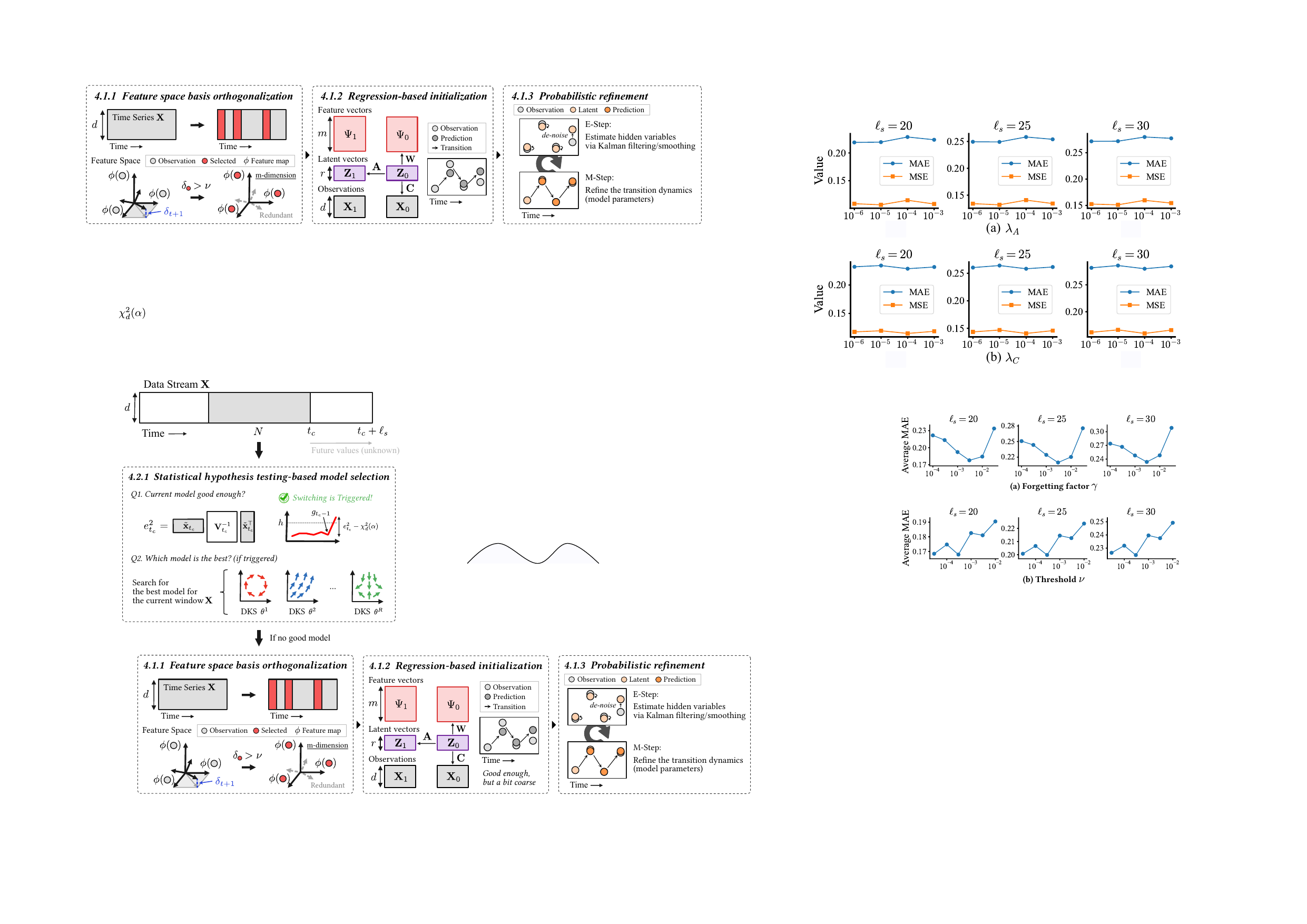}
        \caption{Statistical hypothesis testing-based model selection in \method:
        Given new data $\vx_\tc$,
        the normalized innovation squared (NIS) is monitored by a one-sided CUSUM test.
        If the test triggers switching, it evaluates stored models on the current window $\stream_c$.
        When no model handle the current pattern,
        it estimates new model and adds it to $\regimeset$.}
        \label{fig:model_select}
    \end{center}
\end{figure}

\subsection{Streaming Algorithm}
We describe how the \method model is utilized for data stream processing.
\cref{alg:model} (see \cref{appendix:alg}) describes our streaming algorithm in detail.
Given a new data $\vx_\tc$,
our streaming algorithm sequentially applies the following three steps:
\begin{itemize}[leftmargin=12.5pt]
    \item \textit{Statistical hypothesis testing-based model selection}:
    We employ a change detection mechanism based on the cumulative sum (CUSUM) test~\cite{cusum} of innovation statistics.
    This step determines the best model for the current window $\stream_c$.
    \item \textit{Adaptive parameter update}: Once a model is selected, we recursively estimate its parameters via an online EM algorithm using sufficient statistics.
    \item \textit{Forecasting via latent dynamics}:
    Utilizing the refined parameters and the current latent state estimated by the Kalman filter, we project the system dynamics forward to generate multi-step predictions.
\end{itemize}
Here, we define the parameter set required for these procedures.
\begin{definition}[Model candidate]
    Let $\candparam$ be a parameter set for the current window $\stream_c=\stream[\ts:\tc]$ spanning from the time point $\ts$ to $\tc$, i.e., $\candparam=\{\regime_c, \cS_c, \dict_c,\initcond_c,\initcov_c\}$, where $\regime_c$ best summarizes the current window $\stream_c$ and $\{\initcond_c,\initcov_c\}$ represents the filtered estimators of the latent state at the current time $\tc$,
    which are needed to generate an $\ls$-steps-ahead future value $\est_{\tc+\ls}$.
\end{definition}
\subsubsection{Statistical hypothesis testing-based model selection}
We incrementally update the full parameter set $\regimeset$ and the model candidate $\candparam$ given a new observation $\obstc$ and the current window $\stream_c=\stream[\ts:\tc]$.
We monitor the normalized innovation squared (NIS) derived from the Kalman filter innovations, which follows a $\chi^2$ law under the linear Gaussian assumption.
The one-step-ahead prediction is
\begin{align}
    \label{eq:pred_state}
    \hat{\initcond}^{new}_c
    =\mA\initcond^{prev}_c,\quad
    \hat{\initcov}^{new}_c=\mA\initcov^{prev}_c\mA^\top+\mQ.
\end{align}
The corresponding innovation for the observed channel is
\begin{align}
    \label{eq:innovation}
    \tilde{\obs}_{\tc}=\obstc-\mC\hat{\initcond}^{new}_c,\quad
    \mV_c=\mC\hat{\initcov}^{new}_c\mC^\top+\mR_x,
\end{align}
and we define the normalized innovation squared (NIS) by
\begin{align}
    \label{eq:nis}
    e_{\tc}^2=\tilde{\obs}_{\tc}^\top \mV_c^{-1}\tilde{\obs}_{\tc}.
\end{align}
Under the nominal model, $e_{\tc}^2$ approximately follows a $\chi^2$ distribution with $\d$ degrees of freedom.
Thus, given a confidence level $\alpha\in(0,1)$, we set a threshold $\chi^2_{\d}(1-\alpha)$ and apply a one-sided CUSUM test:
\begin{align}
    \label{eq:cusum}
    g_{\tc}=\max\left\{0,\,g_{\tc-1}+\bigl(e_{\tc}^2-\chi^2_{\d}(1-\alpha)\bigr)\right\}.
\end{align}
If $g_{\tc}$ exceeds a switching limit $h$, we consider the current model to be inconsistent and search for a better model.

When switching is triggered, it validates each model $\regime\in\regimeset$ using the current window $\stream_c$.
For every candidate $\regime\in\regimeset$, we reset its latent state to a broad prior; then refine the initial latent moments via a smoothing-based initialization.
After this initialization, we run a Kalman filter over the window and compute the NIS sequence $\{e_t^2\}_{t=\ts}^{\tc}$ based on \cref{eq:nis}.
We select the model that minimizes the mean value of the NIS sequence.
However, if there is no model such that the maximum value of the NIS sequence is lower than $h$, \method creates a new model from $\stream_c$ via the static optimization and inserts it into $\regimeset$ to handle an unknown pattern adaptively.

\begin{table*}[t]
\centering
\normalsize
\caption{
Multivariate forecasting results averaged across $71$ datasets, where the best results are in \textbf{bold} and the second best are underlined.
\method consistently outperforms its baselines.
Full results for all datasets are provided in \cref{appendix:results}.
}
\vspace{-0.5em}
\begin{tabular}{c|ccc|ccc}
\toprule
Metrics &
\multicolumn{3}{c|}{\textit{Mean Squared Error (MSE)}} &
\multicolumn{3}{c}{\textit{Mean Absolute Error (MAE)}} \\
\midrule

Steps &
$20$ & $25$ & $30$ &
$20$ & $25$ & $30$ \\
\midrule 

sKAF (2023) &
$34.0 \pm 112$ & $59.6 \pm 285$ & $81.4 \pm 270$ &
$1.20 \pm 1.96$ & $1.41 \pm 2.35$ & $1.72 \pm 2.61$ \\
Koopa (2023) &
$0.397 \pm 0.417$ &$0.377 \pm 0.331$  & $0.361 \pm 0.338$ & 
$0.430 \pm 0.165$ & $0.427 \pm 0.154$ & $0.423 \pm 0.144$ \\
OneNet (2023) &
$0.318 \pm 0.148$ & $0.321 \pm 0.146$ & $\underline{0.322 \pm 0.145}$ & 
$0.441 \pm 0.123$ & $0.443 \pm 0.122$ & $0.444 \pm 0.121$ \\
PAttn (2024) &
$0.311 \pm 0.206$ & $0.326 \pm 0.206$ & $0.330 \pm 0.196$ & 
$0.407 \pm 0.139$ & $0.417 \pm 0.138$ & $0.421 \pm 0.134$ \\
WPMixer (2025) &
$0.561 \pm 1.95$ & $0.509 \pm 1.56$ & $0.436 \pm 0.916$ &
$0.411 \pm 0.233$ & $\underline{0.413 \pm 0.214}$ & $\underline{0.413 \pm 0.194}$ \\
ModePlait (2025) &
$\underline{0.279 \pm 0.0918}$ & $\underline{0.300 \pm 0.104}$ & $0.325 \pm 0.115$ & 
$\underline{0.407 \pm 0.0857}$ & $0.423 \pm 0.0914$ & $0.441 \pm 0.0964$ \\

\midrule
\method (Ours) &
$\mathbf{0.0763\pm 0.0539}$ & $\mathbf{0.0999\pm 0.0659}$ & $\mathbf{0.120\pm 0.0747}$ & 
$\mathbf{0.183\pm 0.0769}$ & $\mathbf{0.214\pm 0.0857}$ & $\mathbf{0.240\pm 0.0888}$ \\
\bottomrule
\end{tabular}
\label{table:detailed_results}
\end{table*}

\subsubsection{Adaptive parameter update}
\label{section:adaptive_param_update}
Once a model $\regime_c$ is selected, it performs a one-step online EM update with a forgetting factor $\gamma\in(0,1]$.
This update consists of (i) feature-space adaptation by \cref{eq:ald} and (ii) recursive updates of sufficient statistics.
Given a new point $\obstc$, we compute the residual $\delta_{\tc}$ as in \cref{eq:ald}.
First, if $\delta_{\tc}>\nu$ holds, we add $\obstc$ to $\dict_c$ and obtain the coefficient vector $\va_{\tc}=\mK^{-1}\vk(\obstc)$.
When the dictionary grows from $m$ to $m+1$, the augmented observation map $\mH$ must be expanded accordingly.
We expand $\mW$ by one row via $\va_{\tc}^\top \mW$ and extend the feature-noise covariance $\mR_\psi$ by a block-diagonal update.
Similarly, we expand the sufficient statistic $\mS_3$ so that its dimension matches $d+m$.
This step prevents reinitialization and enables stable updates even when the feature space is growing.
If the dictionary size exceeds $m_{\max}$ after an update, we employ a pruning strategy to remove the basis vector that is most linearly dependent on the others, thereby minimizing the loss of information in the feature space.
Specifically, it is a known property of the Schur complement that the reciprocal of the $j$-th diagonal element $1/[\mK^{-1}]_{jj}$ equals the squared residual error of approximating the $j$-th basis vector using the linear span of the remaining $m-1$ vectors.
Therefore, we select the index $j^* = \argmax_j [\mK^{-1}]_{jj}$ for removal.
We then update $\mK^{-1}$ by applying a rank-$1$ downdate operation, which is mathematically equivalent to reversing the recursive update in \cref{eq:ald_update}, and delete the corresponding rows and columns from $\mW$, $\mR_\psi$, and $\mS_3$.

Next, given $(\hat{\initcond}_c^{new},\hat{\initcov}_c^{new})$ in \cref{eq:pred_state},
we apply one Kalman update to obtain the filtered state $(\initcond_c^{new},\initcov_c^{new})$.
We introduce the sufficient statistics set rule.
Using the filtered moments $\E[\latent_{\tc}\latent_{\tc}^\top]=\initcov_c^{new}+\initcond_c^{new}(\initcond_c^{new})^\top$ and a filtered approximation of $\E[\latent_{\tc}\latent_{\tc-1}^\top]$, we update
\begin{align}
    \label{eq:online_stat1}
    \mS_1^{new}&\leftarrow(1-\gamma)\,\mS_1^{prev}+\gamma\,\E[\latent_{\tc}\latent_{\tc}^\top],\\
    \label{eq:online_stat2}
    \mS_2^{new}&\leftarrow(1-\gamma)\,\mS_2^{prev}+\gamma\,\E[\latent_{\tc}\latent_{\tc-1}^\top],\\
    \label{eq:online_stat3}
    \mS_3^{new}&\leftarrow(1-\gamma)\,\mS_3^{prev}+\gamma\,\vy_{\tc}\E[\latent_{\tc}^\top].
\end{align}
Then, the parameters are updated in closed form by
\begin{align}
    \label{eq:online_A}
    \mA^{new}&\leftarrow \mS_2^{new} (\mS_1^{prev} + \lambda_A\idmat)^{-1},\\
    \label{eq:online_H}
    \mH^{new}&\leftarrow \mS_3^{new} (\mS_1^{new})^{-1}.
\end{align}

\subsubsection{Forecasting via latent dynamics}
\noindent
Finally, it forecasts future values using the model candidate $\candparam$.
Let $\initcond_c^{new}$ be the filtered latent mean after processing $\obstc$.
Because the latent dynamics are linear, the $\ls$-steps-ahead future value is simply computed by
\begin{align}
    \label{eq:forecast_latent}
    \hat{\latent}_{\tc+\ls}=\mA^{\ls}\initcond_c^{new},\quad
    \hat{\obs}_{\tc+\ls}=\mC\hat{\latent}_{\tc+\ls}.
\end{align}
This forecasting step is computationally light and directly leverages the current model selected by the statistical tests.

\begin{lemma}[Time complexity of \method]
    \label{lemma:time}
    Let $m$ be the dictionary size, $R$ be the number of stored models, and $r$ be the latent dimension.
    The time complexity of \method is at least $O(m^2 + r^2(d+m))$ and at most $O(\n m^2+m^3+R\n\rank^2(d+m)+\textnormal{\#}\:\!iter\cdot\n\rank^2(d+m))$ per process.
    See \cref{proof:time} for details.
\end{lemma}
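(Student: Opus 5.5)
We bound the cost of processing one new observation $\obstc$ by walking through the three streaming steps in order: model selection, adaptive parameter update, and forecasting. For each step we separate the \emph{normal} path (no switching) from the \emph{switching} path. The lower bound is the cost of the cheapest path, which every step must pay. The upper bound is the cost of the most expensive path, in which switching fires and a new model must be estimated from $\stream_c$. We treat $\ls$ and the window length as quantities bounded by $\n$, with $\n$ now denoting the length of the current window. We also use $\rank\le m$ and $\rank\le d+m$ freely.

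\textbf{Normal path (lower bound).} Each operation on this path contributes at most $O(m^2+\rank^2(d+m))$.
\begin{itemize}
\item The one-step prediction \cref{eq:pred_state} costs $O(\rank^3)$.
\item The innovation \cref{eq:innovation} and NIS \cref{eq:nis} cost $O(d\rank^2+d^3)$, which we absorb under the paper's implicit convention that $d$ is small relative to $m$. Alternatively they cost $O(\rank^2 d)$ if the inverse is obtained via the Woodbury form.
\item The CUSUM update \cref{eq:cusum} costs $O(1)$.
\item Computing $\delta_{\tc}$ via \cref{eq:kernel_ald} requires $\vk(\obstc)$ in $O(md)$ kernel evaluations and the quadratic form with $\mK^{-1}$ in $O(m^2)$. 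A possible insertion \cref{eq:ald_update} or the pruning downdate is a rank-one block update, also $O(m^2)$. Expanding $\mW$ by the row $\va_{\tc}^\top\mW$ costs $O(m\rank)$.
\item The Kalman update with the augmented $\mH\in\R^{(d+m)\times\rank}$ should be carried out in information form, i.e.\ via $\mH^\top\mR^{-1}\mH$ and $\mH^\top\mR^{-1}\vy_{\tc}$, so that only $\rank\times\rank$ inverses appear. This works because $\mR$ is block diagonal with cached factorizations, and it costs $O(\rank^2(d+m)+\rank^3)$.
\item The sufficient-statistic updates \cref{eq:online_stat1,eq:online_stat2,eq:online_stat3} cost $O(\rank^2+(d+m)\rank)$.
\item The closed-form updates \cref{eq:online_A,eq:online_H} cost $O(\rank^3+(d+m)\rank^2)$.
\item Forecasting \cref{eq:forecast_latent} costs $O(\ls\rank^2+d\rank)$.
\end{itemize}
Summing these gives $O(m^2+\rank^2(d+m))$.

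\textbf{Switching path (upper bound).} When $g_{\tc}>h$, every stored $\regime\in\regimeset$ is re-evaluated on $\stream_c$. This requires a smoothing-based initialization and a Kalman filter pass over at most $\n$ steps, each step costing $O(\rank^2(d+m))$ by the same information-form argument. Over all $R$ models this totals $O(R\n\rank^2(d+m))$. If no stored model passes the test, the static optimization is invoked on $\stream_c$, and we simply apply \cref{theorem:time_complexity_of_static_optimization} to get $O(\n m^2+m^3+\#iter\cdot\n\rank^2(d+m))$. Adding the normal-path cost, which is dominated by these terms, gives the stated upper bound.

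\textbf{Main obstacle.} The delicate point is the claim that a Kalman step costs $O(\rank^2(d+m))$ rather than $O((d+m)^3)$, since \cref{eq:kalman_gain} as written inverts a $(d+m)\times(d+m)$ matrix. I would handle this first by invoking the matrix inversion lemma together with the block-diagonal structure of $\mR$. This requires maintaining $\mR_\psi^{-1}$ incrementally; when $\mR_\psi$ is diagonal, or updated in a block-diagonal fashion as described, maintaining it costs $O(m)$ or $O(m^2)$ per step. A rank-one update of $\mR_\psi$ would still fit within $O(m^2)$, so the bounds stand either way.
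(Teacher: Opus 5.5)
Your proposal is correct and follows the same route as the paper's proof. The non-switching path costs $O(m^2)$ for the residual and inverse-Gram update plus $O(r^2(d+m))$ for the Kalman/online-EM step via the matrix inversion lemma, while the switching path adds $O(RTr^2(d+m))$ for re-filtering every stored model on the current window and, when no model passes, the static-optimization cost from \cref{theorem:time_complexity_of_static_optimization}, with forecasting treated as negligible.

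One caveat, which the paper's one-line Woodbury argument also glosses over: if $\mathbf{R}_\psi$ is a full $m\times m$ matrix, forming $\mathbf{W}^\top\mathbf{R}_\psi^{-1}\mathbf{W}$ from a cached factorization costs $O(m^2 r)$ because $\mathbf{W}$ changes at every step, so your ``cached factorizations'' remark needs either a diagonal $\mathbf{R}_\psi$ or the recursion $\mathbf{R}^{-1}\mathbf{S}_3^{new}=(1-\gamma)\mathbf{R}^{-1}\mathbf{S}_3^{prev}+\gamma(\mathbf{R}^{-1}\mathbf{y}_{t_c})\mathbb{E}[\mathbf{z}_{t_c}]^\top$ (itself $O(d^2+m^2+(d+m)r)$ per step), after which $\mathbf{R}^{-1}\mathbf{H}=(\mathbf{R}^{-1}\mathbf{S}_3)\mathbf{S}_1^{-1}$ costs only $O((d+m)r^2)$.
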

\noindent
\cref{lemma:time} indicates that the computational time of \method is constant with regard to the length of a data stream $\tc$.
Thus, it is practical for semi-infinite data streams in terms of execution speed. 

\section{Experiments}
    \label{section:experiments}
    We ran experiments to answer the following questions.
{\setlength{\leftmargini}{19.5pt}
\begin{enumerate}
    \renewcommand{\labelenumi}{\textit{Q\arabic{enumi}.}}
    \item 
    \textit{Accuracy}: How accurately does it forecast future values?
    \item 
    \textit{Scalability}: How does it scale in computational time?
    \item
    \textit{Nonlinear capability}:
    How well does it estimate and leverage complex nonlinear dynamics in a streaming fashion?
    \item
    \textit{Sensitivity analysis}:
    How do hyperparameters influence real-time forecasting performance?
\end{enumerate}
}
\subsection{Experimental Setup}
\subsubsection{Datasets}
We used the dysts benchmark~\cite{Dysts}, which offers a diverse and practical collection of 71 chaotic dynamical systems for evaluating time series forecasting and data-driven modeling techniques.
This benchmark is particularly noteworthy for its wide applicability across diverse fields, including astrophysics, climatology, and biochemistry.
Note that chaotic dynamical systems are a kind of nonlinear dynamical system, and some of them consist of multiple distinct dynamic patterns~\cite{farnoosh2021deep}.
Hence, this benchmark is suitable for our experimental evaluation.
\subsubsection{Baselines}
We compared \method with the following seven models for time series forecasting.
\begin{itemize}[leftmargin=12.5pt]
    \item 
    ModePlait~\cite{modeplait} is a streaming algorithm for the discovery of time-evolving causality and forecasting future values.
    \item 
    WPMixer~\cite{murad2025wpmixer} is an MLP-based method that uses multi-level wavelet decomposition to integrate information from both time and frequency domains.
    \item 
    PAttn~\cite{tan2024language} is a simple linear model with patching and attention for feature extraction.
    \item 
    OneNet~\cite{wen2023onenet} is an online ensembling network that handles concept drift by utilizing both channel independence and cross-variable dependency.
    \item 
    Koopa~\cite{liu2023koopa} disentangles non-stationary time series into time-variant and time-invariant components by approximating the Koopman operator.
    \item 
    Streaming KAF (sKAF)~\cite{giannakis2023learning} reduces computational costs by approximation using randomized features and Nystr\"om methods.
\end{itemize}

\subsection{Results}
\subsubsection{Q1. Accuracy}
\label{section:accuracy}
We first assessed the quality of \method in terms of $\ls$-steps-ahead forecasting accuracy.
We used two evaluation metrics, the mean square error (MSE) and the mean absolute error (MAE),
both of which provide better results when they are closer to zero.
\cref{table:detailed_results} shows the mean and standard deviation of the results across $5$ different seed values and $71$ datasets,
where the best and second-best levels of performance are shown in \textbf{bold} and \underline{underlined}, respectively.
We provide the critical difference diagrams of these results
and the full results for all datasets
in \cref{appendix:results}.
\method outperforms state-of-the-art baselines by employing the Koopman operator for the estimation of nonlinear dynamics and the adaptive processing of nonstationary data streams.
Although deep learning-based models exhibit high generality for time series modeling, they cannot adapt to time-varying environments and thus degrade significantly on datasets with distinct dynamic patterns.
ModePlait summarizes an entire data stream through the discovery of time-evolving causality.
However, since it is not specialized for modeling nonlinear dynamical systems,
its performance may degrade for highly complex phenomena.
Streaming KAF showed considerable instability in forecasting accuracy because its update rule cannot handle unknown dynamic patterns.
The results also showed that methods capable of handling time-varying dynamic patterns exhibit relatively small standard deviations.

\begin{figure}[t]
    \begin{center}
        \includegraphics[width=1.0\linewidth]{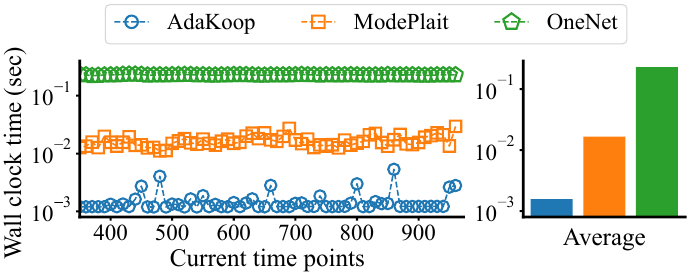}
        \caption{Scalability of \method: (left) Wall clock time vs. data stream length $\tc$ and (right) average time consumption.}
        \label{fig:time}
    \end{center}
    \vspace{-1.0em}
\end{figure}

\begin{table}[t]
\centering
\normalsize
\caption{
Forecasting results across different kernel functions.
}
\vspace{-0.5em}
\begin{tabular}{c|c|c|c}
\toprule
\multicolumn{2}{c|}{Metrics} &
\multicolumn{1}{c|}{MSE} &
\multicolumn{1}{c}{MAE} \\
\midrule


\multirow[t]{3}{*}{RBF}
 & 20 & 
$0.0763\pm 0.0539$ & $0.183\pm 0.0769$ \\
 & 25 & 
$0.0999\pm 0.0659$ & $0.214\pm 0.0857$ \\
 & 30 & 
$0.120\pm 0.0747$ & $0.240\pm 0.0888$ \\
\midrule
\multirow[t]{3}{*}{Polynomial}
& 20 &
$0.115 \pm 0.0922$ & $0.221 \pm 0.0953$ \\
& 25 &
$0.144 \pm 0.108$ & $0.254 \pm 0.102$ \\
& 30 &
$0.164 \pm 0.114$ & $0.278 \pm 0.103$ \\
\midrule
\multirow[t]{3}{*}{Sigmoid}
& 20 &
$0.218 \pm 0.0868$ & $0.365 \pm 0.0837$ \\
& 25 &
$0.239 \pm 0.0967$ & $0.384 \pm 0.0889$ \\
& 30 &
$0.253 \pm 0.101$ & $0.398 \pm 0.0920$ \\
\midrule
\multirow[t]{3}{*}{Linear}
& 20 &
$0.218 \pm 0.0870$ & $0.366 \pm 0.0846$ \\
& 25 &
$0.239 \pm 0.0963$ & $0.386 \pm 0.0893$ \\
& 30 &
$0.253 \pm 0.0999$ & $0.400 \pm 0.0917$ \\
\bottomrule
\end{tabular}
\label{table:kernel}
\vspace{-1.0em}
\end{table}

\subsubsection{Q2. Scalability}
We evaluated the performance of \method in terms of computational time.
\cref{fig:time} compares the computational efficiency of methods that handle time-varying nonlinear dynamic patterns in a data stream.
It presents the computational time at each time point $\tc$ on the left and the average time on the right.
Note that both figures are shown on linear-log scales.
\method consistently outperformed its competitors in terms of computational efficiency, thanks to the efficient streaming algorithm.
In addition, this result aligns with the discussion provided in \cref{lemma:time}.
OneNet requires significantly more time for model updates than other methods due to the use of two complementary models.
Although ModePlait operates at a moderate speed, it not only forecasts but also discovers causal relationships sequentially, which makes it slower than our algorithm.

\subsubsection{Q3. Nonlinear capability}
We demonstrated that \method accurately captures nonlinear dynamics and utilizes the learned models for real-time forecasting.
To validate this, we compared the RBF kernel, which meets our theoretical requirements, with other suboptimal kernels.
\cref{table:kernel} shows the forecasting results across various kernel functions.
The RBF kernel consistently yields the best forecasting performance, demonstrating its superior capability in capturing complex nonlinear dynamics as mentioned in \cref{section:accuracy}.
While the polynomial kernel achieves relatively high performance, it falls short of the RBF kernel because this kernel maps data into a finite-dimensional space.
In contrast, the linear kernel fails to capture complex nonlinear dynamics, leading to a substantial decrease in forecasting accuracy.
Although the sigmoid kernel is an infinite-dimensional kernel, it is not positive definite.
Consequently, it violates the theoretical requirements of the Koopman operator regression framework, resulting in performance comparable to that of a linear kernel.
These results empirically demonstrate that our probabilistic augmented state space model effectively leverages RKHS properties to estimate nonlinear dynamics despite being in linear system form and accurately performs real-time forecasting in a nonstationary environment.

\begin{figure}[t]
    \centering
    \includegraphics[width=1.0\linewidth]{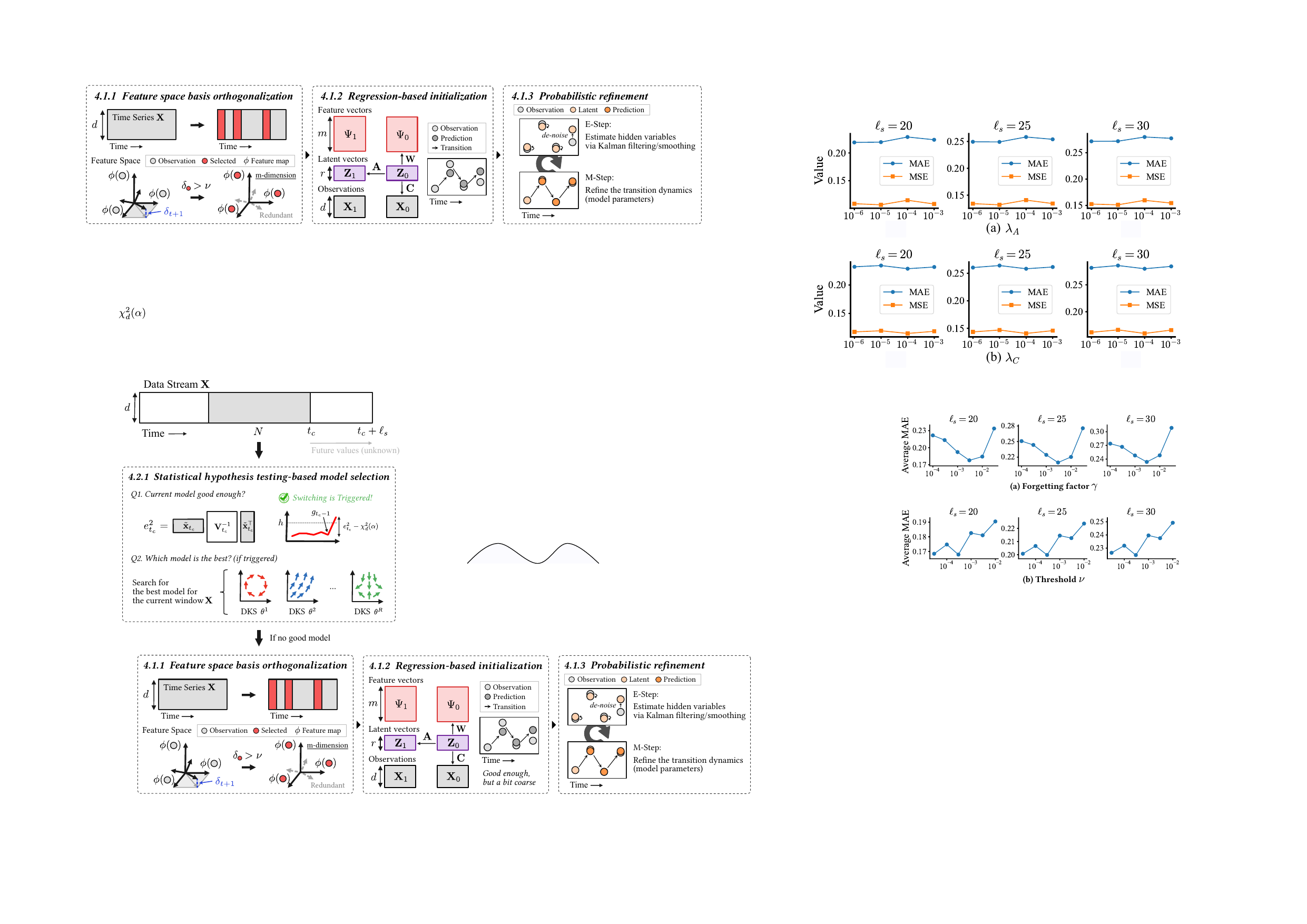}
    \vspace{-1.5em}
    \caption{Sensitivity analysis results.}
    \label{fig:hypara_sensitivity}
    \vspace{-0.5em}
\end{figure}

\subsubsection{Q4. Sensitivity analysis}
We evaluated the sensitivity of our algorithm to its hyperparameters.
\cref{fig:hypara_sensitivity} shows the forecasting accuracy (MAE) when varying hyperparameters.
\cref{fig:hypara_sensitivity}\,(a) shows the impact of the forgetting factor $\gamma$.
As $\gamma$ increases, \method incorporates more recent information and discards the past one.
This result indicates that the best $\gamma$ lies in $[10^{-3}, 10^{-2}]$.
When $\gamma$ is too small, the algorithm adapts too slowly to changes in a nonstationary data stream, resulting in poor tracking performance.
Conversely, a large $\gamma$ results in unstable estimation due to the insufficient effective sample size.
Next, \cref{fig:hypara_sensitivity}\,(b) illustrates the sensitivity to the threshold $\nu$.
This parameter controls the sparsity of the dictionary in the feature space.
We can observe that a tendency towards improved accuracy as $\nu$ decreases.
This is because that a smaller threshold enables the model to maintain a richer dictionary of basis vectors, allowing for a more precise approximation of complex nonlinear dynamics.
Note that small $\nu$ improves model expressiveness but increases computational costs, so we should select an appropriate value that balances accuracy and efficiency according to the requirements.

\section{Conclusion}
    \label{section:conclusion}
    In this paper, we presented an efficient streaming method, \method, designed for modeling nonlinear dynamics from nonstationary data streams with Koopman operator regression.
Our leading solution relies on a probabilistic dual-view state space model that treats both raw observations and RKHS features as emissions from a shared latent linear system, thereby enabling stable and efficient modeling of nonlinear dynamics.
To adaptively handle nonstationarity in data streams, our algorithm employs statistical hypothesis testing and continuous parameter updates, but its computational complexity remains constant regardless of the data stream length.
Experimental results across $71$ practical benchmark datasets showed that \method achieved remarkable improvements over competitors in both forecasting accuracy and computational efficiency.
The results provide empirical validation that \method is suitable for real-time forecasting in nonstationary data streams.

\section*{Acknowledgments}
We would like to thank the anonymous referees
for their valuable and helpful comments.
This work was partly supported by
``Program for Leading Graduate Schools'' of the Osaka University, Japan, 
JSPS KAKENHI Grant-in-Aid for Scientific Research Number
JP25KJ1729,    
JP26H02499,  
JST CREST JPMJCR23M3, 
JST K Program JPMJKP25Y6, 
JST COI-NEXT JPMJPF2009, 
JST COI-NEXT JPMJPF2115, 
Future Social Value Co-Creation Project - Osaka University. 

\bibliographystyle{bib/ACM-Reference-Format}
\bibliography{
bib/reference,
bib/general
}


\appendix
\section*{Appendix}
\label{appendix}
\section{Notation and Terminology}
\label{appendix:notation}
Table \ref{table:symbols} lists the symbols and definitions used in our work.
\begin{table}[ht]
    \centering
    \caption{Symbols and definitions.}
    \small
    \vspace{-0.5em}
    \resizebox{1.0\linewidth}{!}{
        \begin{tabular}{l|l}
            \toprule
            Symbol & Definition \\
            \midrule
            \multicolumn{2}{l}{\textit{General notation}}\\
            \midrule
            $\d$ & Number of dimensions \\
            $\n$ & Window size \\
            $\tc$ & Current time point \\
            $\vx\ut$ & Observation at time $\t$ \\
            $\stream$ & Nonstationary multivariate data stream (semi-infinite) \\
            $\stream_c$ & Current window, i.e., $\mX_c = \stream[t_s:\tc]\in\R^{\d\times\n}$ \\
            $\!\Hhil$ & Reproducing kernel Hilbert space (RKHS) \\
            $\fmap$ & Feature map, i.e., $\fmap:\M\to\Hhil$ \\
            $\>\!k$ & Kernel function, i.e., $k=\Hinner{\phi(\vx)}{\phi(\vy)}$ \\
            $\>\!\dagger$ & Moore--Penrose pseudoinverse \\
            \midrule
            \multicolumn{2}{l}{\textit{Dual-view kernelized system (DKS)}}\\
            \midrule
            $\>\!\!m$ & Dimension of feature space \\
            $\>\!\rank$ & Dimension of latent space \\
            $\vy_t$ & Augmented observations (two-views) \\
            $\latent_t$ & Latent state vector \\
            $\regime$ & DKS model, i.e., $\regime=\{\mA,\mQ,\mC,\mW,\mR_x,\mR_\psi,\boldsymbol\mu_0,\mP_0\}$\\
            $R$ & Number of models \\
            $\regimeset$ & DKS model set, i.e., $\regimeset 
             = \{ \regime_i \}_{i=1}^R$\\
            \midrule
            \multicolumn{2}{l}{\textit{Static optimization}}\\
            \midrule
            $\indices_t$ & Indices of selected data points \\
            $\dict_t$ & Dictionary, i.e., $\dict=\{\obs_\tau\mid\tau\in\indices_t\}$ \\
            $\hat{\Hhil}_t$ & Dictionary subspace, i.e., $\hat{\Hhil}_t=\mathtt{Span}\{\fmap(\obs_i):i\in\indices_t\}\subseteq\Hhil$ \\
            $\Pi_{\dict_t}$ & Orthogonal projection onto $\hat{\Hhil}$ \\
            $\mK\ut$ & Gram matrix over $\dict_t$ \\
            $\vk\ut(\vx)$ & Kernel vector between $\vx$ and $\dict_t$ \\
            $\delta_{t+1}$ & Residual distance of $\fmap(\vx_{t+1})$ to $\hat{\Hhil}_t$ \\
            $\psi(\vx)$ & Feature vector based on dictionary, i.e., $\psi(\obs)=\left[k(\obs_i,\obs)\right]_{i\in\indices_\n}^\top$ \\
            $\Psi$ & Feature matrix, i.e., $\Psi=[\psi(\obs\ut)]_{\t=1}^\n$ \\
            $\lambda_A$ & Regularization parameter \\
            $\!\!\{\tilde{\mA},\tilde{\mC},\tilde{\mW}\}$ & Coarse estimators for initializing the EM algorithm \\
            $\mG_t$ & Kalman gain \\
            $\,\mJ_t$ & Smoother gain \\
            $\textnormal{\#}\:\!iter$ & Number of EM iterations \\
            \midrule
            \multicolumn{2}{l}{\textit{Streaming algorithm}}\\
            \midrule
            $\>\!\!\cS$ & Sufficient statistical set, i.e., $\cS=\{\mS_1,\mS_2,\mS_3\}$ \\
            $\gamma$ & Forgetting factor \\
            $\ls$ & Forecasting steps \\
            ${\boldsymbol{\mu}}_c,{\mP}_c$ & Filtered latent mean/covariance at time $\tc$ \\
            $\cC$ & Model candidate, i.e., $\mathcal{C}=\{\theta_c,\cS_c,\dict_c,\boldsymbol\mu_c,\mP_c\}$ \\
            $\tilde{\vx}_{\tc}$ & Innovation, i.e., $\tilde{\vx}_{\tc}=\vx_\tc - \mC\hat{\boldsymbol{\mu}}^{new}_c$ \\
            $e_\tc^2$ & Normalized innovation squared (NIS) \\
            $\alpha$ & Confidence level \\
            $g_\tc$ & One-sided CUSUM statistic computed from the NIS sequence \\
            \bottomrule
        \end{tabular}
    }
    \normalsize
    \label{table:symbols}
    \vspace{-0.5em}
\end{table}

\section{Preliminaries}
\label{appendix:background}
\subsection{Koopman Operator Theory}
Let $\rand{X}\coloneqq\{\rand{X}_\t: \t\in\N\}$ be a family of random variables with values in a measurable space $(\M, \boldsymbol\Sigma_{\!\M})$, called state space. We assume that

\noindent
$\rand{X}$ is a time-homogeneous\footnote{Time-homogeneity denotes that $P(\randx_{t+1}\in B\mid\randx_t=x)$ for any measurable set $B$ does not depend on $t$.} Markov chain\footnote{Markov chain denotes that $\randx_{t+1}$ only depends on the present state $\randx_{t}$.} with a transition kernel $p: \M\times\Sigma_\M\to[0, 1]$ such that
\begin{align*}
    P(\randx_{t+1}\in B\mid\randx_t=x)=p(x,B),\quad\forall\t\in\N,~~\forall(x,B)\in\M\times\Sigma_{\!\M}.
\end{align*}
For a vector space $\F$ of $\Sigma_{\!\M}$-measurable observables $\eta:\M\to\C$,
the Markov transfer operator $\mathcal{K}_\F:\F\to\F$ is defined by
\begin{align*}
    (\mathcal{K}_\F\:\!\obssca)(x)\coloneqq\int_\M \obssca(y)p(x, dy)=\E[\obssca(\rand{X}_{\tt})\mid\rand{X}_\t=x],
\end{align*}
Some Markov chains admit the existence of an invariant probability measure $\invm$ that satisfies
$$
\invm(B)=\int_\M\invm(dx)p(x, B),\quad B\in\Sigma_\M.
$$
\begin{figure}[t]
\begin{algorithm}[H]
    \caption{\method($\mX_c, \regimeset, \candparam$)}
    \label{alg:model}
    \begin{algorithmic}[1]
        \Require (a) Current window $\mX_c\in\R^{\d\times\n}$ at current time $\tc$ 
        \Statex \hspace{4.2mm} (b) Full parameter set $\regimeset$,
        \Statex \hspace{4.3mm} (c) Model candidate $\candparam=\{\regime_c, \cS_c, \dict_c,\initcond_c,\initcov_c\}$
        \Ensure (a) Updated full parameter set $\regimeset'$
        \Statex \hspace{6.2mm} (b) Updated model candidate $\candparam'$
        \Statex \hspace{6.3mm} (c) $\ls$-steps-ahead future value $\est_{\tc+\ls}$
        \State $\hat{\boldsymbol\mu}_c=\mA\boldsymbol\mu_c$, $\hat{\mP}_c=\mA\mP_c\mA^\top+\mQ$ \Comment{\cref{eq:pred_state}}
        \State $\tilde{\obs}_\tc=\obs_\tc-\mC\hat{\boldsymbol\mu}_c$, $\mV_c=\mC\hat{\mP}_c\mC^\top+\mR_x$ \Comment{\cref{eq:innovation}}
        \State $e_{\tc}^2=\tilde{\obs}_{\tc}^\top \mV_c^{-1}\tilde{\obs}_{\tc}$ \Comment{\cref{eq:nis}}
        \State $g_{\tc}=\max\left\{0,\,g_{\tc-1}+\bigl(e_{\tc}^2-\chi^2_{\d}(1-\alpha)\bigr)\right\}$ \Comment{\cref{eq:cusum}}
        \If{$g_{\tc}>h$} 
        \State $\regimeset_h\leftarrow\{\regime\in\regimeset:\max(\{e^2_t\}_{t=\ts}^\tc)<h\}$
        \If{$|\regimeset_h|=0$}
            \State $\candparam'\leftarrow\textsc{StaticOptimization}(\stream_c)$ \Comment{\cref{alg:learning}}
            \State $\regimeset'\leftarrow\regimeset\cup\regime_c$
        \Else
            \State $\regime_c\leftarrow\argmin_{\regime\in\regimeset_h}\mathrm{mean}(\{e^2_t\}_{t=\ts}^\tc)$
        \EndIf
        \EndIf
        \If{NOT create new model}
            \State Update dictionary $\dict_c'$
            \Comment{\cref{eq:ald,eq:kernel_ald,eq:ald_update}}
            \State Update filtered estimators $\boldsymbol\mu_c',\mP_c'$ \Comment{\cref{eq:pred_mu,eq:pred_P,eq:kalman_gain,eq:filter_mu,eq:filter_P}}
            \State Update $\cS'_c$, $\mA'$, $\mH'$ \Comment{\cref{eq:online_stat1,eq:online_stat2,eq:online_stat3,eq:online_A,eq:online_H}}
            \State $\candparam'\leftarrow\{\regime_c,\dict_c',\cS'_c,\boldsymbol\mu_c',\mP_c'\}$
        \EndIf
        \State $\hat{\obs}_{\tc+\ls}=\mC\hat{\latent}_{\tc+\ls}$,\ \ $\hat{\latent}_{\tc+\ls}=\mA^{\ls}\initcond_c'$ \Comment{\cref{eq:forecast_latent}}
    \State\Return $\{\regimeset', \candparam', \est_{\tc+\ls}\}$
    \end{algorithmic}
\end{algorithm}
\vspace{-1.0em}
\end{figure}
\begin{figure}[ht]
\begin{algorithm}[H]
    \caption{\textsc{StaticOptimization}\,($\mX$)}
    \label{alg:learning}
    \begin{algorithmic}[1]
        \Require Time series data $\mX\in\R^{\d\times\n}$
        \Ensure Model Candidate $\candparam=\{\regime,\cS,\dict_\n,\boldsymbol{\mu}_{\n|\n},\mP_{\n|\n}\}$
        \State Initialize $\dict_{1}=\{\vx_1\}$; $\indices_{1}=\{1\}$
        \For{$\t\in[\n-1]$}
            \State $\delta\utt=k(\vx\utt,\vx\utt)-\vk_t^\top(\vx\utt)\mK_t^{-1}\vk\ut(\vx\utt)$ \Comment{\cref{eq:kernel_ald}}
            \If{$\delta\utt>\nu$}
                \State $\dict_{\t+1} \leftarrow \dict_{\t} \cup \{\obs\tindex{\t+1}\}$;
                $\indices_{\t+1} \leftarrow \indices_{\t} \cup \{\t+1\}$
                \State Update $\mK_{t}^{-1}$ to $\mK_{t+1}^{-1}$ \Comment{\cref{eq:ald_update}}
            \EndIf
        \EndFor
        \State Construct feature matrices $\Psi, \Psi_0, \Psi_1$ \Comment{\cref{eq:feature_mat}}
        \State Compute moment matrices $\mS_{00},\mS_{10}$ \Comment{\cref{eq:moment_mat}}
        \State $\{\mU_r,\boldsymbol\Sigma_\rank,\mV_\rank\}\leftarrow\textsc{TruncateSVD}(\mS_{10}\mS_{00}^{-1/2})$
        \State $\tilde{\mA}\leftarrow\boldsymbol\Sigma_\rank\mV_\rank^\top\mS_{00}^{-1/2}\mU_\rank$,\ \ $\tilde{\mW}\leftarrow\mU_r$ 
        \State $\tilde{\mC}\leftarrow\mX\mZ^\top(\mZ\mZ^\top)^{-1}$ \Comment{$\mZ=\tilde{\mW}^\dagger\Psi$}
        \Repeat
        \State Estimate $\{\boldsymbol{\mu}_{\t|\n}\}_{\t=1}^\n$, $\{\mP_{\t|\n}\}_{\t=1}^\n$ \Comment{\cref{eq:pred_mu,eq:pred_P,eq:kalman_gain,eq:filter_mu,eq:filter_P,eq:smoother_gain,eq:smoother_mu,eq:smoother_P}}
        \State Update $\regime=\{\mA,\mQ,\mC,\mW,\mR_x,\mR_\psi,\initcond_0,\initcov_0\}$ \Comment{\cref{eq:update_initial,eq:update_A,eq:update_H,eq:update_Q,eq:update_Rx,eq:update_Rpsi}}
        \Until{convergence}
        \State Set $\cS=\{\mS_1,\mS_2,\mS_3\}$ \Comment{\cref{def:suff}}
        \State\Return $\{\regime,\cS,\dict_\n,\boldsymbol{\mu}_{\n|\n},\mP_{\n|\n}\}$
    \end{algorithmic}
\end{algorithm}
\vspace{-1.0em}
\end{figure}

In this case,
it is natural to take $\F=\Ltwo$ and denote the corresponding Koopman operator by $\koop$.
With the Hilbert space inner product
$$\inner{\eta}{\xi}_{\Ltwo}\coloneqq\int_\M\eta(x)\overline{\xi(x)}\pi(dx),$$
the operator $\koop$ is bounded on $\Ltwo$ and, by Jensen's inequality together with the invariant measure $\pi$, is a contraction:
$$\norm{\koop\eta}_{\Ltwo}\leq\norm{\eta}_{\Ltwo}.$$
Intuitively, the norm $\norm{\eta}_{\Ltwo}$ measures the mean-square amplitude of the observable $\eta$ when states are sampled according to the invariant measure $\pi$.
The Koopman operator maps $\eta$ to the one-step-ahead conditional mean $(\koop\eta)(x)$.
Thus, in stochastic systems, $\koop$ acts as an averaging operator: it extracts the component of the future observable that is predictable from the current state.
Such averaging can remove unpredictable fluctuations, but it cannot increase the mean-square amplitude under the stationary weighting $\pi$.
Hence, $\koop$ is non-expansive in $\Ltwo$.
In addition, the classical Koopman operator for deterministic dynamical systems is recovered as a special case of this formulation.
Specifically, for a measurable map $T:\M\to\M$ satisfying $X_{t+1}=T(X_t)$,
setting $p(x,B)=\delta_{T(x)}(B)$ gives $(\koop\eta)(x)=\eta(T(x))$,
showing that the conditional-expectation formulation used here is a stochastic generalization of the classical deterministic Koopman framework.
\par
One of the main advantages of Koopman operator theory is that it represents the evolution of observables through a linear operator, enabling spectral analysis even when the underlying state evolution is nonlinear.
Indeed, in many situations, and notably for normal compact Koopman operators, the spectral theorem yields an orthonormal basis of a set of eigenfunctions $\{\eigf_1, \eigf_2, \ldots\}$ of $\koop$ with eigenvalues $\{\eigv_1, \eigv_2, \ldots\}$ such that $$\koop\eigf_i=\eigv_i\eigf_i,\quad\forall i\in\N,$$ where $\eigf_i:\M\to\C$ and $\eigv_i\in\C$.
Hence, every $\eta\in\Ltwo$ can be expressed as the superposition of simpler basis vectors as follows:
$$\eta=\sum_{i\in\N}\gamma_i\psi_i,\quad \gamma_i=\inner{\eta}{\psi_i}_{\Ltwo}.$$
Consequently, for $t\in\N$,
\begin{align*}
    (\koop^\t\obssca)&=\sum_{i\in\N}\eigv_i^t\mode_i\eigf_i,
\end{align*}
which is known as Koopman mode decomposition (KMD)~\cite{mezic2005spectral, budivsic2012applied}.
For a fixed observable $\eta$, the decomposition is described by a set of triples $\{ (\eigv_i, \eigf_i, \mode_i) \}_{i\in\N}$.
However, a Koopman operator is not compact or normal in general. Therefore, without the above additional assumptions, there is no general guarantee that its eigenfunctions form a complete orthonormal basis of the space, which makes learning KMD challenging.

\section{Algorithm}
\label{appendix:alg}
Here, we collect pseudocode for our proposed method.
\cref{alg:model} outlines the overall streaming framework, which executes adaptive model selection, parameter updates, and real-time forecasting.
\cref{alg:learning} details the static optimization procedure designed to estimate model parameters from a finite time series window.

\begin{figure*}[t]
    \centering
    \begin{minipage}{0.48\linewidth}
        \includegraphics[width=\linewidth]{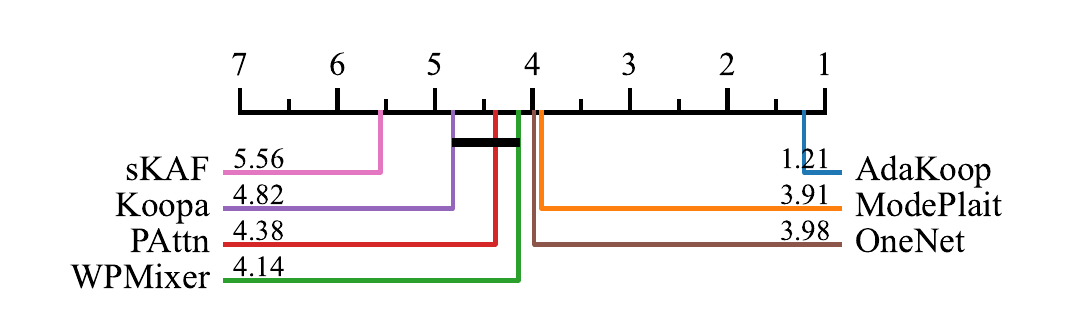}
        \subcaption{Mean Squared Error (MSE)}
    \end{minipage}
    \begin{minipage}{0.48\linewidth}
        \includegraphics[width=\linewidth]{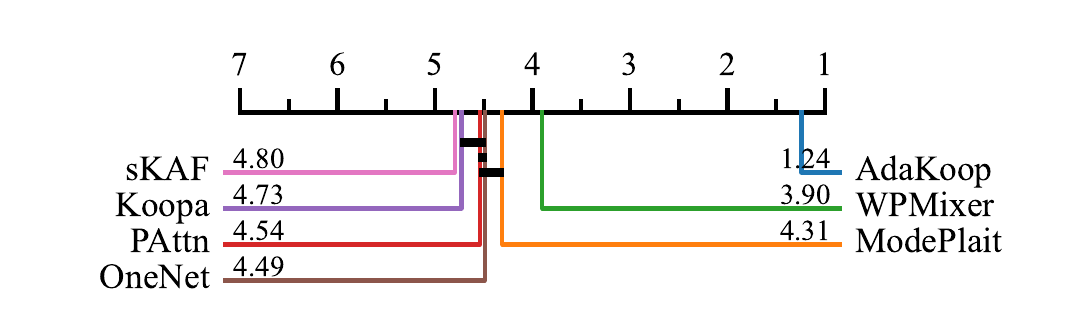}
        \subcaption{Mean Absolute Error (MAE)}
    \end{minipage}
    \caption{
        Critical difference diagrams of multivariate forecasting results. We used forecasting steps $\ls\in\{20, \ldots, 30\}$.
    }
    \label{fig:critical_difference}
    \vspace{-1.0em}
\end{figure*}


\begin{figure*}[t]
    \centering
    \begin{minipage}{0.48\linewidth}
        \includegraphics[width=\linewidth]{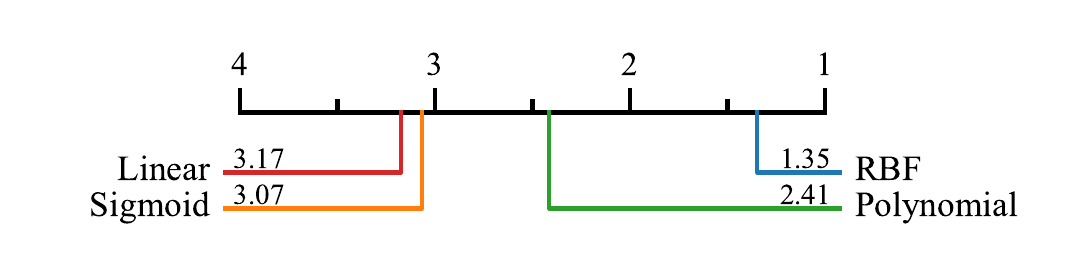}
        \subcaption{Mean Squared Error (MSE)}
    \end{minipage}
    \begin{minipage}{0.48\linewidth}
        \includegraphics[width=\linewidth]{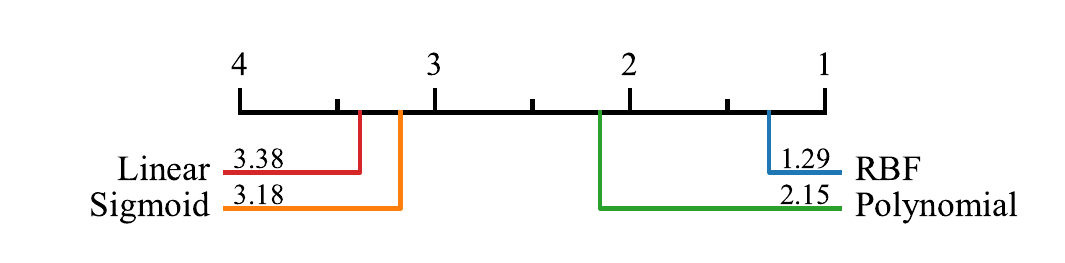}
        \subcaption{Mean Absolute Error (MAE)}
    \end{minipage}
    \caption{
        Critical difference diagrams of \method with various kernels.
        We used forecasting steps $\ls\in\{20, \ldots, 30\}$.
    }
    \label{fig:sensitivity_analysis}
\end{figure*}

\section{Proofs}
\subsection{Proof of \cref{theorem:time_complexity_of_static_optimization}}
\label{proof:time_complexity_of_static_optimization}

\begin{replemma}{theorem:time_complexity_of_static_optimization}[Time complexity of static optimization]
Let $m$ be the final dictionary size produced by the basis orthogonalization, and let $\rank$ be the latent dimension.
Then, the time complexity of the static optimization is $O(\n m^2+m^3+\textnormal{\#}\:\!iter\cdot\n\rank^2(d+m))$.
\end{replemma}

\begin{proof}
We derive the time complexity of the static optimization described in \cref{section:static_optimization}.
It consists of three stages; therefore, it suffices to sum the time complexities of each stage.

\mypara{(1) Feature space basis orthogonalization}
At each $t$, computing the residual
$\delta\utt$ in \cref{eq:ald}
requires forming the kernel vector $\vk_t(\vx_t)\in\R^{m_t}$ and a matrix--vector multiplication with $\mK_t^{-1}$, which costs $O(m_t^2)$.
If a point is added, updating $\mK_{t+1}^{-1}$ by \cref{eq:ald_update} is also $O(m_t^2)$.
Since $m_t\le m$ for all $t$, the total cost is $O(\n m^2)$.

\mypara{(2) Regression-based initialization}
Forming the moment matrices $\mS_{00}$ and $\mS_{10}$ in \cref{eq:moment_mat} costs $O(\n m^2)$.
Computing $\mS_{00}^{-1/2}$ via eigendecomposition costs $O(m^3)$.
Subsequent multiplications to form $\mM=\mS_{10}\mS_{00}^{-1/2}$ are $O(m^3)$ in the worst case, which is absorbed into $O(m^3)$.
Finally, computing the truncated SVD of an $m\times m$ matrix to obtain the leading $\rank$ components costs at most $O(m^2\rank)$, which is also dominated by $O(m^3)$ because $\rank\le m$.
Hence, the second step costs $O(m^3+\n m^2)$.

\mypara{(3) Probabilistic refinement}
Each EM iteration runs a Kalman filter forward pass and an RTS smoother backward pass for $\n$ steps.
With latent dimension $\rank$ and observation dimension $(d+m)$, if we compute the Kalman gain using the Woodbury identity, then the per-step cost is
$O(\rank^2(d+m)+\rank^3)=O(\rank^2(d+m))$ because $r<m$.
Therefore, the total cost per EM iteration is $O(\n\rank^2(d+m))$, and for $\textnormal{\#}\:\!iter$ iterations it is
$O(\textnormal{\#}\:\!iter\cdot\n\rank^2(d+m))$.
\end{proof}

\subsection{Proof of Lemma \ref{lemma:time}}
\label{proof:time}
\begin{replemma}{lemma:time}[Time complexity of \method]
    Let $m$ be the dictionary size, $R$ be the number of stored models, and $r$ be the latent dimension.
    The time complexity of \method is at least $O(m^2 + r^2(d+m))$ and at most $O(\n m^2+m^3+R\n\rank^2(d+m)+\textnormal{\#}\:\!iter\cdot\n\rank^2(d+m))$ per process.
\end{replemma}
\vspace{-1.0em}
\begin{proof}
We analyze the time complexity per time step $t_c$ in the streaming algorithm.
\method first selects the best model for the current window $\stream_c$ via statistical hypothesis testing.
When the test does not trigger a switch, it continues to use the previous best model.
It computes the residual $\delta_{\tc}$ and updates the dictionary inverse $\mK^{-1}$ as shown in \cref{eq:ald,eq:ald_update}, which takes $O(m^2)$.
Subsequently, an online EM algorithm updates the sufficient statistics and parameters.
By using the Woodbury matrix identity (similar to \cref{theorem:time_complexity_of_static_optimization}), the cost is dominated by matrix multiplications involving the latent dimension $r$ and observation dimension $(d+m)$, resulting in $O(r^2(d+m))$.
In contrast, when the test triggers a switch, \method determines to use other model.
\method first takes $O(RT r^2(d+m))$ time to search for the better model in $\regimeset$.
Furthermore, if \method encounters an unknown pattern,
a new model is estimated from scratch using the static optimization on window $\stream_c$.
As proven in \cref{theorem:time_complexity_of_static_optimization}, the cost for this static optimization is $O(\n m^2+m^3+\textnormal{\#}\:\!iter\cdot\n\rank^2(d+m))$.
Finally, forecast an $\ls$-steps-ahead future value via \cref{eq:forecast_latent}, which is negligible compared to the update steps.
Summing these components yields the desired results.
\end{proof}

\section{Experiments}
\label{appendix:experiments}
\subsection{Experimental Setup}
\label{appendix:exp_settings}
\myparaitemize{Computing infrastructure}
We conducted all our experiments on a system running Ubuntu 22.04.4 LTS (GNU/Linux 5.15.0-105-generic x86\_64), equipped with 2 $\ast$ Intel Xeon Gold 6444Y 3.60GHz 16-core CPU, 12 $\ast$ 64GB DDR5 RAM, and 2 $\ast$ 48GB NVIDIA RTX A6000 GPU.

\myparaitemize{Benchmark}
We adopted the dysts benchmark~\cite{Dysts} as mentioned in the main text.
The length of each dataset was $1,000$ steps,
with a time step of $0.01$ and $5\%$ noise ratio.
We normalized the values of all datasets so that the maximum value was $1$ and the minimum value was $-1$.

\myparaitemize{Implementation details}
We now describe the implementation details we used throughout our experiments.
The train/validation/test splits were $20\%/10\%/70\%$, and
we set the length of the current window $\n$ at $100$ steps.
We conducted all our experiments with $5$ different seeds for a fair comparison.
We employed a grid search to determine each optimal hyperparameter such that it minimizes the error over the validation data.
For \method,
we used a standard Gaussian kernel where the kernel width was set to the median distance between the observed data.
We set the threshold $\nu=0.001$, the confidence level $\alpha=0.01$, the switching limit $h=3\chi^2_{\d}(1-\alpha)$, and the number of iterations $\textnormal{\#}\:\!iter=3$.
We varied the forgetting factor $\gamma\in\{0.01, 0.003, 0.001\}$ and the regularization parameter $\lambda_A\in\{10^{-8}, 10^{-7}, 10^{-6}, 10^{-5}\}$.
For ModePlait, we varied the embedding dimension $h\in\{10, 20, 30\}$ and the threshold within $\{1.0, 1.5, 2.0\}$.
For deep learning-based methods, we used the Adam optimizer~\cite{kingma2015method} and varied the learning rate over $\{0.001, 0.003, 0.01, 0.03\}$.

\subsection{Additional Results}
\label{appendix:results}
\subsubsection{Q1. Accuracy}
Here, we provide the detailed results for evaluating \method in terms of real-time forecasting.
Figure \ref{fig:critical_difference} shows the critical difference diagrams for each metric.
This diagram is based on the Wilcoxon-Holm method~\cite{critical-difference}, where methods not connected by a bold line are sufficiently different regarding their average rank.
It is clear that the significant improvements that \method achieved over its baselines are valid according to statistical tests.
Moreover, \cref{detailed_results_20_1,detailed_results_20_2,detailed_results_20_3,detailed_results_25_1,detailed_results_25_2,detailed_results_25_3,detailed_results_30_1,detailed_results_30_2,detailed_results_30_3} show the mean and standard deviation of the results for all $71$ datasets,
where the best and second-best levels of performance are shown in \textbf{bold} and \underline{underlined}, respectively.
We can observe that \method achieved a comprehensive accuracy improvement over all its baselines.

\subsubsection{Q3. Nonlinear capability}
We considered four types of functions: the RBF kernel, the polynomial kernel, the sigmoid kernel, and the linear kernel.
These formulations are given by
\begin{align*}
    k_{\text{RBF}}(\vx,\vy) &=\exp\left(-\frac{|\vx-\vy|^2}{2\sigma^2}\right), \\
    k_{\text{sigmoid}}(\vx,\vy) &= \tanh(\gamma\vx^\top\vy+c_s), \\
    k_{\text{poly}}(\vx,\vy) &= (\gamma\vx^\top\vy + c_p)^d,\\
    k_{\text{linear}}(\vx,\vy) &= \vx^\top \vy,
\end{align*}
where $\sigma > 0$ is the length scale parameter, $\gamma > 0$ is the scale parameter, $c_p, c_s \geq 0$ are constants, and $d \in \N$ is the degree.
We set $\sigma$ as the median of the pairwise distances of the data matrix. For the polynomial and sigmoid kernels, we set $\gamma = 1/D$, where $D$ is the number of dimensions. We used $d=3$ and $c_p=1$ for the polynomial kernel, and $c_s=0$ for the sigmoid kernel.
\cref{fig:sensitivity_analysis} shows critical difference diagrams of \method with various kernel functions for each metric.
Bold lines indicate insignificant differences between connected methods regarding their average rank.
These results statistically demonstrate that \method effectively leverages the RKHS properties to estimate nonlinear dynamics from a nonstationary data stream.

\vspace{0.35em}
\section{Limitations}
While \method achieves remarkable improvements over its baselines, it has some limitations that may be worth addressing in the future.
First, processing tensor streams with \method requires tensor data to be converted into a matricized or vectorized representation. Such a representation does not explicitly preserve the multi-way tensor structure in the model, which may limit its ability to capture multi-aspect relationships.
Second, \method does not explicitly incorporate exogenous inputs or interventions.
Consequently, changes caused by such external factors may be absorbed as regime shifts or process noise rather than being separated from intrinsic mechanisms.
Extending \method toward controlled and intervention-aware dynamical modeling~\cite{Korda2018-hs,Lorch2024-rq} can broaden its applicability to clinical decision-making, closed-loop robotic control, and demand response forecasting.
Lastly, our probabilistic model and change detection mechanism depend on linear Gaussian state space assumptions.
As a result, data streams with extreme outliers or heavy-tailed noise may lead to degraded performance.
Incorporating robust filtering methods based on generalized Bayes~\cite{Duran-Martin2024-bk} or Student's $t$ distribution~\cite{Roth2013-gf} into \method is a promising direction.

\begin{table*}
\centering
\caption{Detailed multivariate forecasting results with forecasting step $\ls=20$}
\vspace{-1.0em}
\label{detailed_results_20_1}
\scalebox{0.94}
{

}
\end{table*}

\begin{table*}
\centering
\caption{Detailed multivariate forecasting results with forecasting step $\ls=20$}
\vspace{-1.0em}
\label{detailed_results_20_2}
\scalebox{0.94}
{
%
}
\end{table*}

\begin{table*}
\centering
\caption{Detailed multivariate forecasting results with forecasting step $\ls=20$}
\vspace{-1.0em}
\label{detailed_results_20_3}
\scalebox{0.94}
{
%
}
\end{table*}

\begin{table*}
\centering
\caption{Detailed multivariate forecasting results with forecasting step $\ls=25$}
\vspace{-1.0em}
\label{detailed_results_25_1}
\scalebox{0.94}
{
%
}
\end{table*}

\end{document}